\newcommand{\mvec}[1]{\mathbf{#1}}
\newcommand{\Trans}{{^{\mathrm{T}}}}
	\newtheorem{assumption}{Assumption}
\begin{document}
\title{A Maximum Entropy approach to Massive Graph Spectra}

{\author{Diego Granziol, Binxin Ru, Xiaowen Dong, Stefan Zohren, Michael Osborne, Stephen Roberts \\ \email diego@robots.ox.ac.uk, robin@robots.ox.ac.uk \\
\addr
 Machine Learning Research Group and Oxford-Man Institute for Quantitative Finance,\\ Department of Engineering Science, University of Oxford}}


\date{March 23, 2019}

\maketitle

\begin{abstract}
Graph spectral techniques for measuring graph similarity, or for learning the cluster number, require kernel smoothing. The choice of kernel function and bandwidth are typically chosen in an ad-hoc manner and heavily affect the resulting output. We prove that kernel smoothing biases the moments of the spectral density. We propose an information theoretically optimal approach to learn a smooth graph spectral density, which fully respects the moment information. Our method's computational cost is linear in the number of edges, and hence can be applied to large networks, with millions of nodes. We apply our method to the problems to graph similarity and cluster number learning, where we outperform comparable iterative spectral approaches on synthetic and real graphs.

\end{abstract}

\begin{keywords}
  Networks, Information Theory, Maximum Entropy, Graph Spectral Theory, Random matrix theory, iterative methods, kernel smoothing
\end{keywords}

\section{Introduction: networks, their graph spectra and importance}\label{sec:intro}

Many systems of interest can be naturally characterised by complex networks; examples include social networks \citep{mislove2007measurement,flake2000efficient,leskovec2007dynamics}, biological networks \citep{palla2005uncovering} and technological networks. Trends, opinions and ideologies spread on a social network, in which people are nodes and edges represent relationships. 
Networks are mathematically represented by graphs. Of crucial importance to the understanding of the properties of a network or graph is its spectrum, which is defined as the eigenvalues of its adjacency or Laplacian matrix \citep{farkas2001spectra,cohen2018approximating}.
The spectrum  of a graph can be considered as a natural set of graph invariants and has been extensively studied in the fields of chemistry, physics and mathematics \citep{biggs1976graph}. Spectral techniques have been extensively used to characterise the global network structure \citep{newman2006modularity} and in practical applications thereof, such as facial recognition and computer vision \citep{belkin2003Laplacian}, learning dynamical thresholds \citep{mcgraw2008Laplacian}, clustering \citep{von2007tutorial}, and measuring graph similarity \citep{takahashi2012discriminating}. 

A major limitation in utilizing graph spectra to solve problems such as graph similarity and estimating the number of clusters\footnote{Just two example applications of the general method we propose for learning graph spectrum in this paper.} is the inability to automatically and consistently learn an everywhere-positive, non-singular approximation to the spectral density. Full eigen-decomposition, which is prohibitive for large graphs, or iterative moment-matched approximations both give a Dirac sum that must be smoothed to be everywhere positive. The choice of smoothing kernel $k_\sigma(x,x')$ and kernel bandwidth choice $\sigma$, or number of histogram bins, which are usually chosen in an ad-hoc manner, can significantly affect the resulting output. 

The main contributions of this paper are as follows:
\begin{itemize}
    \item We prove that the method of kernel smoothing, commonly used in methods to visualize and compare graph spectral densities, biases moment information;
    \item We propose a computationally efficient and information theoretically optimal smooth spectral density approximation, based on the method of Maximum Entropy, which fully respects the moment information. It further admits analytic forms for symmetric and non-symmetric KL-divergences and Shannon entropy; 
    \item We utilize our information theoretic spectral density approximation, on two example applications. We investigate graph similarity and to learn the number of clusters in a graph, outperforming iterative smoothed spectral approaches on both real and synthetic data-sets
    
\end{itemize}

\section{Preliminaries} \label{sec: preliminaries}

Graphs are the mathematical structure underpinning the formulation of networks. Let $G = (V,E)$ be an undirected graph with vertex set $V = \{v_{i}\}_{i=1}^n$. Each edge between two vertices $v_{i}$ and $v_{j}$ carries a non-negative weight $w_{ij}>0$. $w_{ij}=0$ corresponds to two disconnected nodes. For un-weighted graphs we set $w_{ij}=1$ for two connected nodes. The \emph{adjacency matrix} is defined as $\mathbf{W}$ and  $w_{ij}=[\mathbf{W} ]_{ij}$. The degree of a vertex $v_{i} \in V$ is defined as 
\begin{equation} 
    d_{i} = \sum_{j=1}^{n}w_{ij}. 
\end{equation}
The \emph{degree matrix} $\mathbf{D}$ is defined as a diagonal matrix that contains the degrees of the vertices along diagonal, i.e., $\mathbf{D}_{ii} = d_{i}$. The \emph{unnormalised graph Laplacian} matrix is defined as
\begin{equation}
    \mathbf{L} = \mathbf{D} - \mathbf{W}.
\end{equation}
As $G$ is undirected, $w_{ij}=w_{ji}$, which means that the weight matrix is symmetric and hence $\mathbf{W}$ is symmetric and given $\mathbf{D}$ is symmetric, the unnormalized Laplacian is also symmetric. As symmetric matrices are special cases of normal matrices, they are Hermitian matrices and have real eigenvalues. Another common characterisation of the Laplacian matrix is the \emph{normalised Laplacian} \citep{chung1997spectral},
\begin{equation} \label{lnorm}
    \mathbf{L}_{\mathrm{norm}} = \mathbf{D}^{-\frac{1}{2}}\mathbf{L}\mathbf{D}^{-\frac{1}{2}} = \mathbf{I} - \mathbf{W}_{\mathrm{norm}} = \mathbf{I} - \mathbf{D}^{-\frac{1}{2}}\mathbf{W}\mathbf{D}^{-\frac{1}{2}},
\end{equation}
where $\mathbf{W}_{\mathrm{norm}}$ is known as the normalised adjacency matrix \footnote{Strictly speaking, the second equality only holds for graphs without isolated vertices.}. The spectrum of the graph is defined as the density of the eigenvalues of the given adjacency, Laplacian or normalised Laplacian matrices corresponding to the graph. Unless otherwise specified, we will consider the spectrum of the normalised Laplacian.

\section{Motivations for A New Approach on Approximating and Comparing the Spectra of Large Graphs}\label{subsec:motivations}


For large sparse graphs with millions, or billions, of nodes, learning the exact spectrum using eigen-decomposition is unfeasible due to the $\mathcal{O}(n^{3})$ cost.
Powerful iterative methods, such as the Lanczos algorithm, which only require matrix-vector multiplications, and hence have a computational cost scaling with the number of non-zero nodes in the graph, are often used. These approaches approximate the graph spectrum with a sum of weighted Dirac delta functions, closely matching the first $m$ moments (where $m$ is the number of iterative steps used, as detailed in Appendix \ref{subsec: lanczos}) of the spectral density \citep{Ubaru2016} i.e.:
\begin{equation}
\label{eq:dirac}
    p(\lambda) = \frac{1}{n}\sum_{i=1}^{n}\delta(\lambda-\lambda_{i}) \approx \sum_{i=1}^{m}w_{i}\delta(\lambda-\lambda_{i}),
\end{equation}
where $\sum_{i=1}^m w_{i} = 1$, and $\lambda_{i}$ denotes the $i$-th eigenvalue in the spectrum.
However, such an approximation is undesirable because natural divergence measures between densities, such as the information-based relative entropy $\mathcal{D}_{\mathrm{KL}}(p||q) \in (0,\infty)$ \citep{cover2012elements},\citep{amari2007methods} as equation \ref{eq:kldivergence},
\begin{equation}
\label{eq:kldivergence}
    \mathcal{D}_{\mathrm{KL}}(p||q) =
    \int p(\lambda)\log \frac{p(\lambda)}{q(\lambda)} d\lambda,
\end{equation}
can be infinite for densities that are mutually singular. The use of the Jensen-Shannon divergence simply re-scales the divergence into $\mathcal{D}_{\mathrm{JS}}(p||q) \in (0,1)$. This can lead to counter-intuitive scenarios, such as an infinite (or maximal) divergence upon the removal or addition of a single edge or node in a large network, an infinite (or maximal) divergence between two graphs generated using the same random graph model and identical hyper-parameters. \newline

\subsection{The argument against kernel smoothing:}\label{subsec:kernelsmoothingbad}
To alleviate these limitations, practitioners typically generate a smoothed spectral density by convolving the Dirac mixture with a smooth kernel $k_\sigma(\lambda-\lambda_{i})$ \citep{takahashi2012discriminating,banerjee2008spectrum}, often a Gaussian or Cauchy \citep{banerjee2008spectrum} to facilitate visualisation and comparison. The smoothed spectral density, with reference to Equation \eqref{eq:dirac}, thus takes the form:
\begin{equation} 
    \tilde{p}(\lambda) = \int k_\sigma(\lambda-\lambda') p(\lambda') d\lambda'  =
    \int k_\sigma(\lambda-\lambda')\sum_{i=1}^{n}w_{i}\delta(\lambda'-\lambda_{i})d\lambda'  = \sum_{i=1}^{n}w_{i}k_\sigma(\lambda-\lambda_{i}). \label{eq:smootheddensity}
\end{equation}
We make some assumptions regarding the nature of the kernel function, $k_\sigma(\lambda-\lambda_{i})$, in order to prove our main theoretical result about the effect of kernel smoothing on the moments of the underlying spectral density. Both of our assumptions are met by (the commonly employed) Gaussian kernel.

\begin{assumption}
\label{ass:infdomain}
The kernel function $k_\sigma(\lambda-\lambda_{i})$ is supported on the real line $[-\infty,\infty]$.
\end{assumption}
\begin{assumption}
\label{ass:sym}
The kernel function $k_\sigma(\lambda-\lambda_{i})$ is symmetric and permits all moments.
\end{assumption}
\begin{theorem}
The $m$-th moment of a Dirac mixture $\sum_{i=1}^{n}w_{i}\delta(\lambda-\lambda_{i})$, which is smoothed by a kernel $k_{\sigma}$ satisfying assumptions \ref{ass:infdomain} and \& \ref{ass:sym}, is perturbed from its unsmoothed counterpart by an amount $\sum_{i=1}^{n}w_{i} \sum_{j=1}^{r/2} {r \choose 2j}\mathbb{E}_{k_\sigma(\lambda)}(\lambda^{2j})\lambda_{i}^{m-2j}$, where $r=m$ if $m$ is even and $m-1$ otherwise. $\mathbb{E}_{k_\sigma(\lambda)}(\lambda^{2j})$ denotes the $2j$-th central moment of the kernel function $k_\sigma(\lambda)$.
\end{theorem}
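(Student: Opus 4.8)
The plan is to expand the $m$-th moment of the smoothed density $\tilde p$ directly from its closed form and isolate the single contribution that coincides with the unsmoothed moment; whatever remains will be exactly the claimed perturbation.

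First I would write the $m$-th moment of $\tilde p$ using Equation~\eqref{eq:smootheddensity} and interchange the finite sum with the integral,
$$\int_{-\infty}^{\infty} \lambda^m \tilde p(\lambda)\, d\lambda = \sum_{i=1}^n w_i \int_{-\infty}^{\infty} \lambda^m k_\sigma(\lambda - \lambda_i)\, d\lambda.$$
Assumption~\ref{ass:infdomain} is precisely what licenses the substitution $u = \lambda - \lambda_i$ in each summand without altering the limits of integration, turning the $i$-th integral into $\int_{-\infty}^{\infty} (u + \lambda_i)^m k_\sigma(u)\, du$ over the full real line, so that the moments appearing are genuine central moments of $k_\sigma$ about the origin.

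Next I would apply the binomial theorem, $(u + \lambda_i)^m = \sum_{k=0}^m \binom{m}{k} u^k \lambda_i^{m-k}$, and integrate termwise against $k_\sigma$; this is legitimate because Assumption~\ref{ass:sym} guarantees every central moment $\mathbb{E}_{k_\sigma(\lambda)}(\lambda^k)$ is finite. Symmetry of the kernel (also Assumption~\ref{ass:sym}) forces all odd central moments to vanish, so only even powers $k = 2j$ survive and the sum runs over $j = 0, \dots, \lfloor m/2\rfloor = r/2$, with $r = m$ for even $m$ and $r = m-1$ otherwise. The $j=0$ term is $\binom{m}{0}\lambda_i^m \,\mathbb{E}_{k_\sigma(\lambda)}(\lambda^{0}) = \lambda_i^m$ by normalisation of the kernel, and summing this over $i$ reproduces exactly the unsmoothed moment $\sum_{i=1}^n w_i \lambda_i^m$.

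Subtracting the unsmoothed moment therefore leaves $\sum_{i=1}^n w_i \sum_{j=1}^{r/2} \binom{m}{2j}\, \mathbb{E}_{k_\sigma(\lambda)}(\lambda^{2j})\, \lambda_i^{m-2j}$, which is the stated perturbation. There is no genuine analytic obstacle; the proof is a change of variables followed by a binomial expansion. The only points demanding care are the two bookkeeping facts the assumptions are designed to supply: infinite support, so the shift $u = \lambda - \lambda_i$ is a faithful reindexing of one and the same integral over $\mathbb{R}$ and the surviving moments are central moments of $k_\sigma$; and the simultaneous symmetry and existence of all moments, so that termwise integration is valid and every odd-order term drops out. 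The one subtlety worth stating explicitly in the write-up is that the $j=0$ contribution is identically the original moment, so the perturbation is precisely the $j\ge 1$ tail.
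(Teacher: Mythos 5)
Your proof is correct and takes essentially the same route as the paper's: shift the integration variable (justified by the infinite support), apply the binomial expansion, and use symmetry of the kernel to kill the odd terms, leaving the unsmoothed moment as the $j=0$ contribution. One point in your favour: your coefficient $\binom{m}{2j}$ is the one that actually falls out of the expansion of $(u+\lambda_i)^m$, whereas the statement's $\binom{r}{2j}$ is off for odd $m$ (where $r=m-1$); the role of $r$ is only to cap the summation index at $\lfloor m/2\rfloor$, so the theorem as printed contains a small typo that your derivation implicitly corrects.
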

\begin{proof}
The moments of the Dirac mixture are given as, 
\begin{equation}
    \langle \lambda^{m} \rangle = \sum_{i=1}^{n}w_{i}\int \delta(\lambda-\lambda_{i})\lambda^{m}d\lambda = \sum_{i=1}^{n}w_{i}\lambda_{i}^{m}.
\end{equation}
The moments of the modified smooth function (Equation \eqref{eq:smootheddensity}) are
\begin{equation}
    \begin{aligned}
    \langle \tilde{\lambda}^{m} \rangle & =\sum_{i=1}^nw_{i}\int k_\sigma(\lambda-\lambda_{i})\lambda^{m}d\lambda = \sum_{i=1}^nw_{i}\int k_\sigma(\lambda')(\lambda'+\lambda_{i})^{m}d\lambda' \\ & = \langle \lambda^{m} \rangle + \sum_{i=1}^{n}w_{i} \sum_{j=1}^{r/2} {r \choose 2j}\mathbb{E}_{k_\sigma(\lambda)}(\lambda^{2j})\lambda_{i}^{m-2j}.
    \end{aligned}
\end{equation}
We have used the binomial expansion and the fact that the infinite domain is invariant under shift reparametarization and the odd moments of a symmetric distribution are $0$. 
\end{proof}

\begin{remark}
The above proves that kernel smoothing alters moment information, and that this process becomes more pronounced for higher moments. Furthermore, given that $w_{i} > 0$, $\mathbb{E}_{k_\sigma(\lambda)}(\lambda^{2j}) > 0$ and (for the normalised Laplacian) $lambda_{i} > 0$, the corrective term is manifestly positive, so the smoothed moment estimates are biased.
\end{remark}

\begin{remark}
For large random graphs, the moments of a generated instance converge to those averaged over many instances \citep{feier2012methods}, hence by biasing our moment information we limit our ability to learn about the underlying stochastic process. We include a detailed discussion regarding the relationship between the moments of the graph and the underlying stochastic process in Appendix Section \ref{sec:momentsmatter}.
\end{remark}
\section{An Information Theoretically Optimal Approach to the Problem of Smooth Spectra for Massive Graphs}
For large, sparse graphs corresponding to real networks with millions or billions of nodes, where eigen-decomposition is intractable, we may still be able to compute a certain number of matrix-vector products, which we can use to get unbiased estimates of the spectral density moments, using stochastic trace estimation (as explained in Appendix \ref{subset: stoctrace}). We can settle on a unique spectral density which satisfies the given moment information exactly, known as the density of Maximum Entropy explained in Section \ref{subsec: maxent_method}. 

\subsection{Maximum Entropy: MaxEnt}
\label{subsec: maxent_method}

The method of maximum entropy, hereafter referred to as \emph{MaxEnt} \citep{Presse2013}, is information-theoretically optimal in so far as it makes the least additional assumptions about the underlying density \citep{inftheoryjaynes} and is flattest in terms of the KL divergence compared to the uniform \citep{granziol2019meme}. 
To determine the spectral density $p(\lambda)$ using MaxEnt, we maximise the entropic functional
\begin{equation} \label{BSG}
    S = - \int p(\lambda)\log p(\lambda)d\lambda- \sum_{i}\alpha_{i}\bigg[\int p(\lambda)\lambda^{i}d\lambda - \mu_{i}\bigg],
\end{equation}
with respect to $p(\lambda)$, where $ \mathbb{E}_{p}[ \lambda^{i}]= \mu_{i}$ are the power moment constraints on the spectral density, which are estimated using stochastic trace estimation (STE) as explained in Appendix \ref{subset: stoctrace}. The resultant entropic spectral density has the form
\begin{equation}\label{eq:maxent_distribution}
    p(\lambda \vert \{ \alpha_{i} \}) = \exp[-(1+\sum_{i}\alpha_{i} \lambda^{i})],
\end{equation}
where the coefficients $\{ \alpha_{i} \}_{i=1}^{m}$ are derived from optimising \eqref{BSG}. We use the MaxEnt algorithm, proposed in \citep{granziol2019meme} to learn these coefficients. For simplicity, we denote $p(\lambda \vert \{ \alpha_{i}\}_{i=1}^{m})$ as  $p(\lambda )$. \footnote{We make our Python code available on https://github.com/diegogranziol/Python-MaxEnt}

\begin{wrapfigure}[10]{L}{0.62\textwidth}
\vspace{-14pt}
    \begin{minipage}{0.62\textwidth}
\begin{algorithm}[H]
 	\caption{Entropic Graph Spectrum(EGS) Learner.}
 	\label{alg:EGS}
 	\begin{algorithmic}[1]
 		\STATE {\bfseries Input:} Normalized Laplacian $\mvec{L}_{\mathrm{norm}}$, number of probe vectors $d$, number of moments used $m$
 		\STATE {\bfseries Output:} EGS $ p(\lambda)$
 		\STATE Moments $\{ \mu_i \}_{i=1}^m$ $\leftarrow$ STE $\left(\mvec{L}_{\mathrm{norm}}, d, m \right)$
 		\STATE MaxEnt Coefficients \\ $\{ \alpha_i \}_{i=1}^m$ $\leftarrow$ MaxEnt Algorithm $\left(\{ \mu_i \}_{i=1}^m \right)$
 		\STATE Entropic graph spectrum $p(\lambda) = \exp[-(1+\sum_{i}\alpha_{i}\lambda^{i})]$
 	\end{algorithmic}
\end{algorithm}
\end{minipage}
\end{wrapfigure}

\subsection{The Entropic Graph Spectral Learning algorithm}

The full algorithm for learning the entropic graph spectrum (EGS) is summarized in Algorithm \ref{alg:EGS}. We first estimate the $m$ moments of the normalised graph Laplacian $\{ \mu_i \}_{i=1}^m$ via STE, then use the moment information to solve for MaxEnt coefficients $\{ \alpha_i \}_{i=1}^m$ and compute the EGS via Equation \eqref{eq:maxent_distribution}.

\section{Visualising the Modelling Power of EGS}\label{sec:modelling_power_of_maxent}
Having developed a theory as to why a smooth, exact moment matched approximation of the spectral density is crucial to learning the characteristics of the underlying stochastic process, and having proposed a method (Algorithms \ref{alg:EGS}) to learn such a density, we test the practical utility of our method and algorithm on examples where the limiting spectral density is known.

\subsection{Erd\H{o}s-R\'{e}nyi graphs and The semi-circle law} \label{subsec:semicircle_and_beyond}
For Erd\H{o}s-R\'{e}nyi graphs with edge creation probability $p \in (0,1)$, and $np \rightarrow \infty$, the limiting spectral density of the normalised Laplacian converges to the semi-circle law and its Laplacian converges to the free convolution of the semi-circle law and $\mathcal{N}(0,1)$ \citep{jiang2012empirical}. We consider here to what extent our EGS learnt with finite moments can effectively approximate the density. Wigner's density is fully defined by its infinite number of central moments given by $\mathbb{E}_{\mu}(\lambda^{2n}) = (R/2)^{2n}C_{n}$, where $C_{n}\times (n+1) = {2n \choose n}$ are known as the Catalan numbers. As a toy example we generate a semi-circle centered at $\lambda=0.5$ with $R=0.5$ and use the analytical moments to compute its corresponding EGS (FIG \ref{fig:semi_circle_and_maxent}). As can be seen in FIG \ref{fig:semicirclepure}, for $m=5$ moments, the central portion of the density is already well approximated, but the end points are not. This is largely corrected for $m=30$ moments. 
\begin{figure}[t]
	\begin{subfigure}{0.5\linewidth}
		\centering
	\centering
    \includegraphics[trim=0.7cm 0.1cm 1.0cm 0.0cm, clip, width=1.0\linewidth]{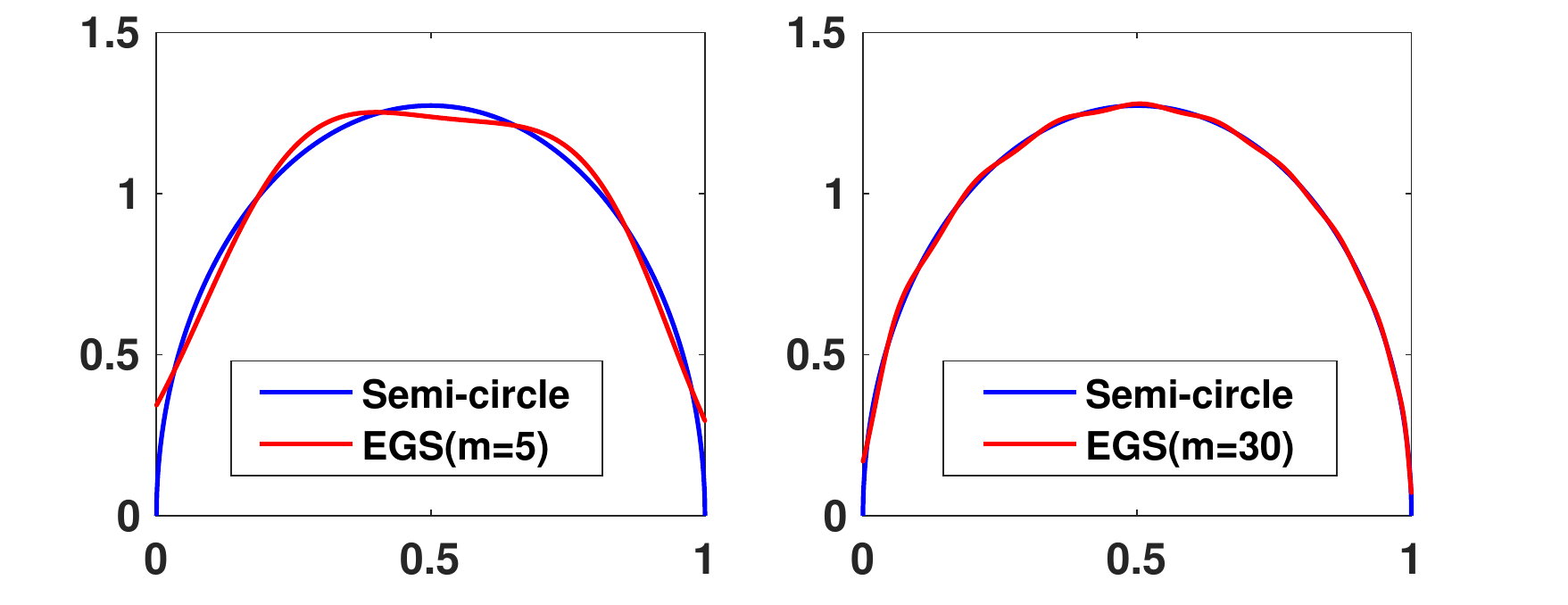}
	\caption{}
	\label{fig:semicirclepure}
	\end{subfigure}\hspace{10pt}
	\begin{subfigure}{0.45\linewidth}
		\centering
	\includegraphics[trim=0.1cm 0cm 0.0cm 0cm, clip, width=1.0\linewidth]{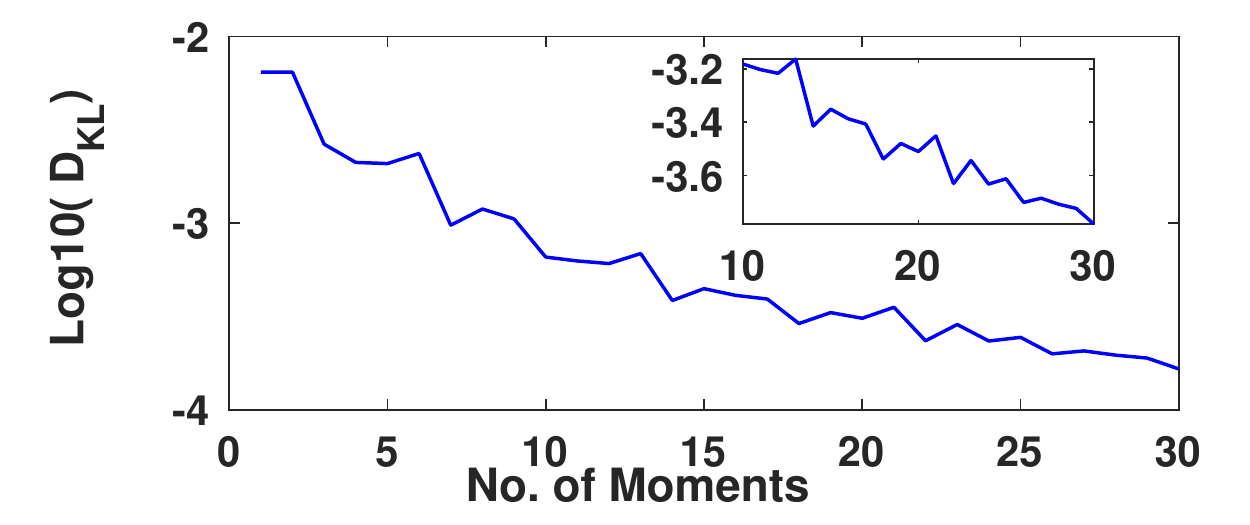}
	\caption{} \label{fig:maxentsemicirclekl}
	\end{subfigure}
	\label{kl_between_semi_circle_and_maxent}
	\caption{EGS fit to a semi-circle density that is centered at 0.5 and has a radius of 0.5 $[x_{0},R] = [0.5,0.5]$ for different moment number $m$. (a) visualises the quality of fit for $m=5$ and $m=30$. (b) shows the KL divergence between the true semi-circle density and the EGS.}    \label{fig:semi_circle_and_maxent}
\vspace{-12pt}
\end{figure}

We generate an Erd\H{o}s-R\'{e}nyi graph with $p=0.001$ and $n=5000$, and learn the moments using stochastic trace estimation. We then compare the fit between the EGS computed using a different numbers of input moments $m = 3, 30, 60,100$ and the graph eigenvalue histogram computed by eigen-decomposition. We plot the results in FIG \ref{fig:maxenter}. One striking difference between this experiment and the previous one is the number of moments needed to give a good fit. This can be seen especially clearly in the top left subplot of FIG \ref{fig:maxenter}, where the 3 moment, i.e Gaussian approximation, completely fails to capture the bounded support of the spectral density. Given that the exponential polynomial density is positive everywhere, it needs more moment information to learn the regions of boundedness of the spectral density in its domain. In the previous example we artificially alleviated this phenomenon by putting the support of the semi-circle within the entire domain. It can be clearly seen that increasing moment information successively improves the fit to the support FIG \ref{fig:maxenter}. Furthermore, the magnitude of the oscillations, which are characteristic of an exponential polynomial function, decay in magnitude for larger moments. 

\begin{table}[t]
\vspace{-12pt}
\begin{tabular}{lc}
\begin{minipage}{.53\textwidth}
\begin{figure}[H]
	\centering
   	 \includegraphics[trim=1.3cm 0.7cm 1.5cm 0cm, clip, width=1.0\linewidth]{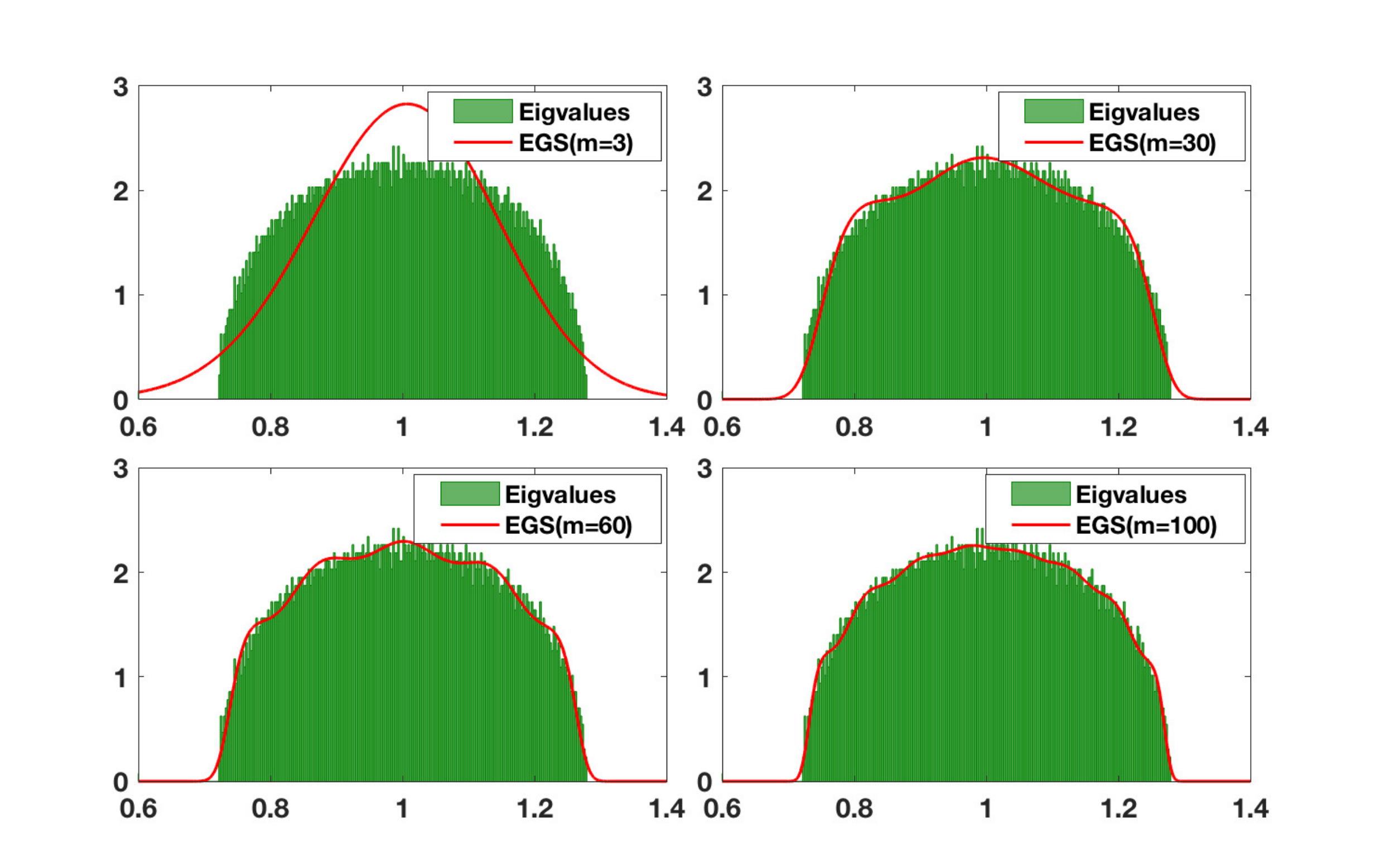}
	\caption{EGS fit to randomly generated $p=0.001, n = 5000$ Erd\H{o}s-R\'{e}nyi graph. The number of moments $m$ used increases from 3 to 100 and the number of bins used for the eigenvalue histogram is $n_{b}=500$.}\label{fig:maxenter}
\end{figure}
\end{minipage}\hspace{10pt}
\begin{minipage}{.4\textwidth}
\begin{figure}[H]
	\centering
    \includegraphics[trim=1.3cm 0.0cm 1.5cm 0.5cm, clip, width=1.0\linewidth]{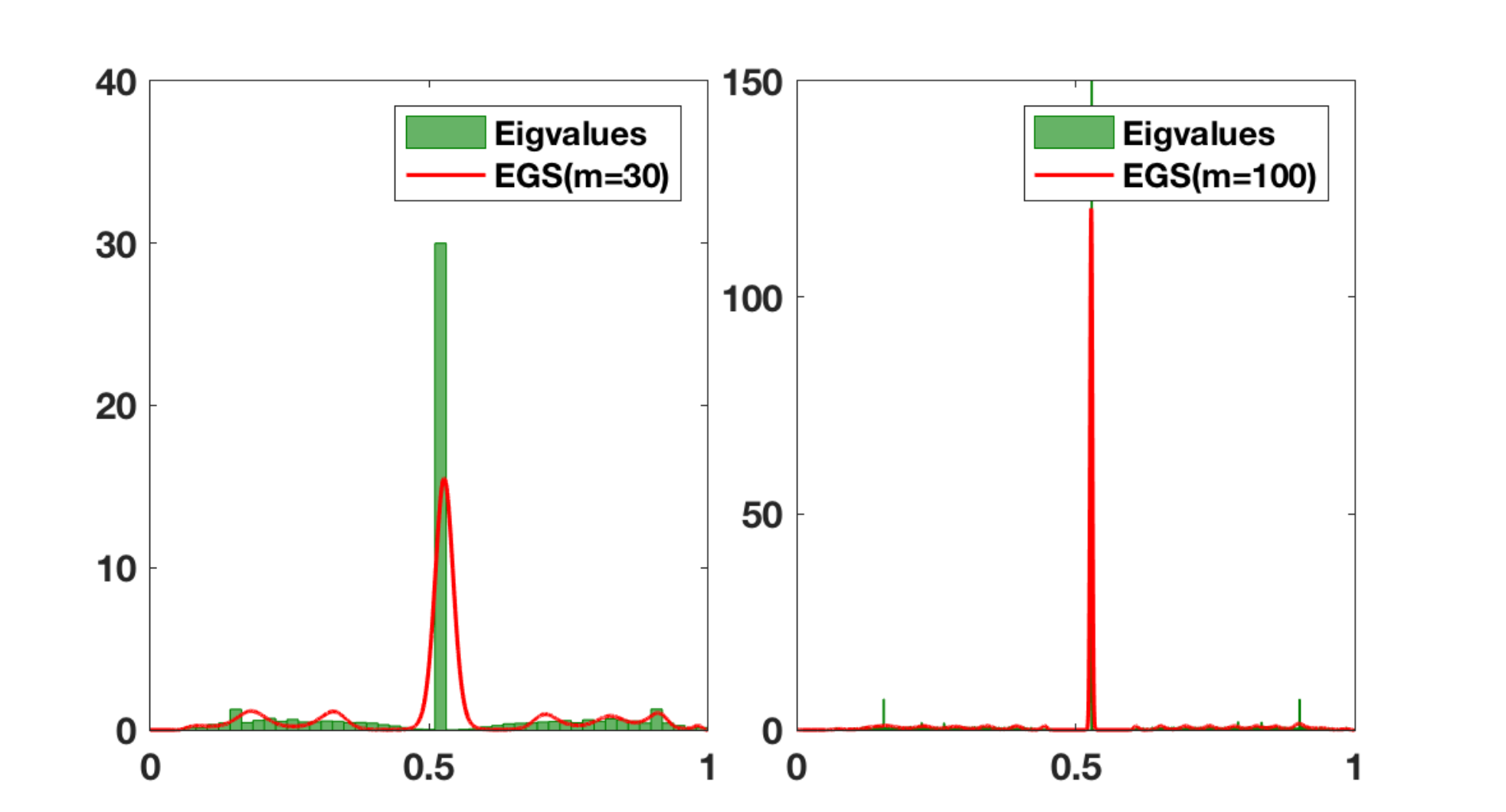}
    \caption{EGS fit to randomly generated $n = 5000$ Barab\'{a}si-Albert graph. The number of moments used for computing EGSs and the number of bins used for the eigenvalue histogram are $m=30$, $n_{b}=50$ (Left) and $m=100$, $n_{b}=500$ (Right).}\label{fig:maxentba}
\end{figure}
\end{minipage}
\end{tabular}
\vspace{-12pt}
\end{table}	


\subsection{Beyond the semi-circle law} \label{subsec:ba_networks}

For the adjacency matrix of an Erd\H{o}s-R\'{e}nyi graph with $p \propto 1/n$, the limiting spectral density does not converge to the semi-circle law and has an elevated central portion, and the scale free limiting density converges to a triangle like distribution \citep{farkas2001spectra}. For other random graph, such as the Barab\'{a}si-Albert \citep{barabasi1999emergence}, also known as the scale-free network, the probability of a new node being connected to a certain existing node is proportional to the number of links that existing node already has, violating the independence assumption required to derive the semi-circle density. We plot a Barab\'{a}si-Albert network ($n=5000$) and, similar to Section \ref{subsec:semicircle_and_beyond}, we learn the EGS and plot the resulting spectral density against the eigenvalue histogram,  shown in FIG \ref{fig:maxentba}. For the Barab\'{a}si-Albert network, due to the extremity of the central peak, a much larger number of moments is required to get a reasonable fit. We also note that increasing the number of moments is  akin to increasing the number of bins in terms of spectral resolution, as seen in FIG \ref{fig:maxentba}.


\section{EGS for Measuring Graph Similarity}\label{sec:all_experimetns}

In this section, we test the use of our EGS in combination with symmetric KL divergence to measure similarity between different types of synthetic and real world graphs. Note that our proposed EGS,  based on the MaxEnt distribution, enables the symmetric KL divergence to be computed analytically - this we show in Appendix \ref{sec: analytic_diff_ent_and_divergence}. We first investigate the feasibility of recovering the parameters of random graph models, and then move onto classifying the network type as well as computing graph similarity among various synthetic and real world graphs.

\subsection{Inferring parameters of random graph models}

We investigate whether one can recover the network parameter values of a graph via its learned EGS. We generate a random graph of a given size and parameter value (e.g., $n=50, p=0.6$) and learn its entropic spectral characterisation using our EGS learner (Algorithm \ref{alg:EGS}). Then, we generate another graph of the same size but learn its parameter value by minimising the symmetric-KL divergence between its entropic spectral surrogate and that of the original graph. We repeat the above procedures for different random graph models i.e. Erd\H{o}s-R\'{e}nyi (ER), Watts-Strogatz (WS) and Barab\'{a}si-Albert (BA) and different graph sizes ($n=50, 100, 150$), and the results are shown in Table \ref{table:learn_synnet_para}. It can be seen that,  given  the approximate EGS, we are able to learn  well the parameters of the graph producing that spectrum.

\begin{table}[t]
	\vspace{-12pt}
\begin{tabular}{cc}
\begin{minipage}{0.58\textwidth}
\begin{table}[H]
	\caption{Average parameters estimated by our MaxEnt-based method for the 3 types of network. The number of nodes in the network is denoted by $n$.}\label{table:learn_synnet_para}
    \centering
	\begin{tabular}{lcccr}
		\toprule
		$n$ & 50  & 100  & 150 \\ 
		\midrule
		ER ($p=0.6$)     & $0.600$   & $0.598$  & $ 0.604$   \\
		WS ($p=0.4$)  & $0.468$   & $0.454$   & $0.414$   \\
		BA ($r=0.4n$)   & $18.936$ & $40.239$  & $58.428$   \\
		\bottomrule
	\end{tabular}
\end{table}
\end{minipage}&
\begin{minipage}{.4\textwidth}
 \begin{table}[H]
	\caption{Minimum KL divergence between the EGSs of random networks and that of a large BA graph and YouTube network.}
	\label{table:learn_synnet_real_type}
	\centering
	\begin{tabular}{lcccr}
		\toprule
		& Large BA  & YouTube \\ 
		\midrule
		ER  & $2.662$ &$7.728$\\
		WS & $7.612$ &$ 9.735$ \\
		BA & $\mathbf{2.001}$ & $ \mathbf{7.593} $\\
		\bottomrule
	\end{tabular}
\end{table} 
\end{minipage}
\end{tabular}
\vspace{-12pt}
\end{table}


\subsection{Learning real world network types}\label{subsec: network_classification_maxent_kl}


Determining which random graph model best fit a real-world network, characterised by their spectral divergence can lead to better understanding of its dynamics and characteristics. This has been explored for small biological networks \citep{takahashi2012discriminating} where full eigen-decomposition is viable. Here, we conduct similar experiments for large networks based on our EGS method. We first test on a large ($5000$-node) synthetic BA network. By minimising the symmetric KL divergence between its EGS and those of small (1000-node) random networks (ER, WS, BA), we successfully recover its own type. As a real-world use case, we further repeat the experiment to determine which random network can best model the YouTube network from the SNAP dataset \citep{snapnets} and find, as shown in Table \ref{table:learn_synnet_real_type}, that the BA gives the lowest divergence, which aligns with other findings for social networks \citep{barabasi1999emergence}.


\begin{wrapfigure}{L}{0.6\textwidth}
    \begin{minipage}{0.6\textwidth}
\begin{figure}[H]
\vspace{-12pt}
    \centering
    \includegraphics[width = 1.0\linewidth]{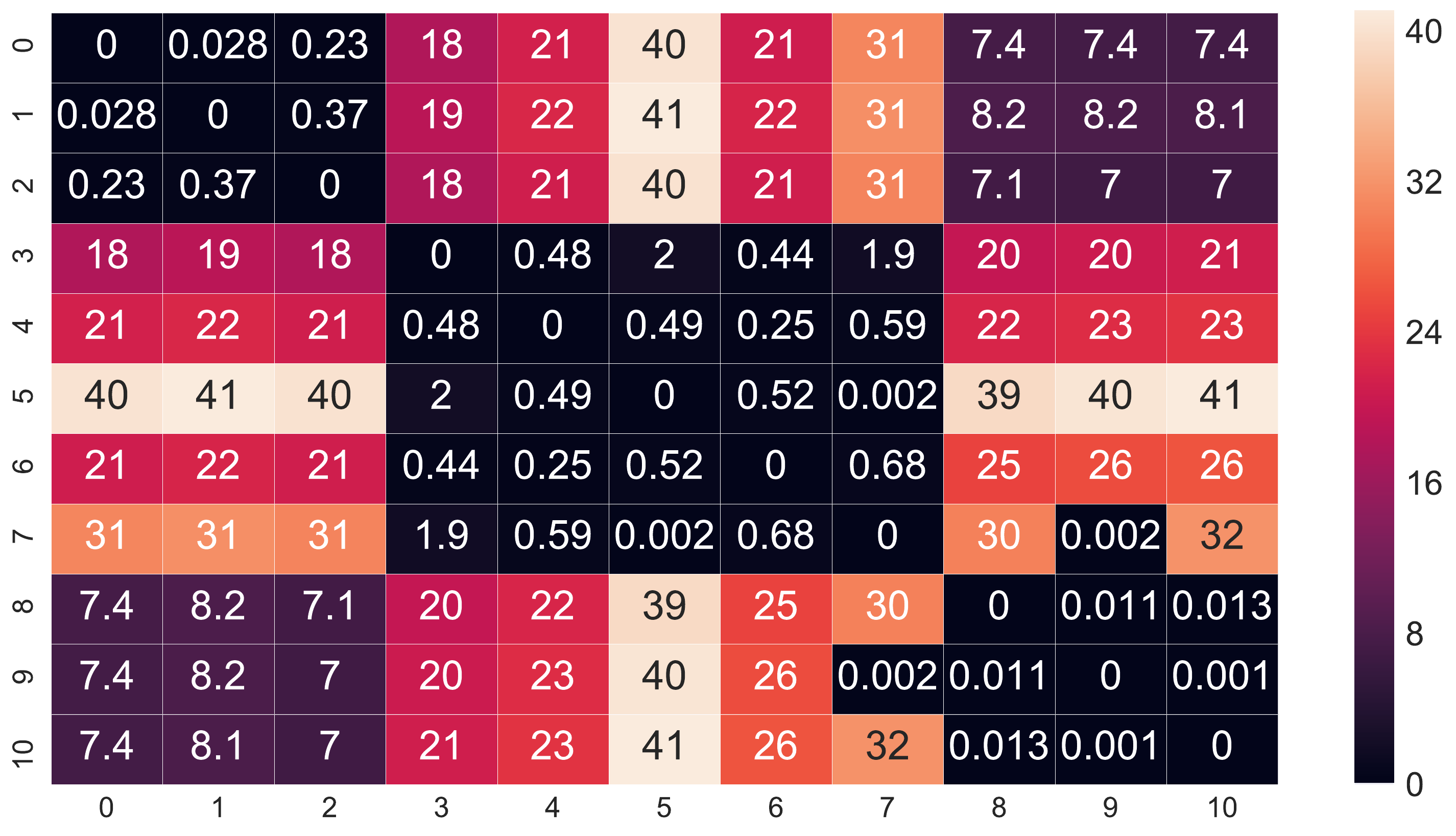}
    \caption{Symmetric KL heatmap between 9 graphs from the SNAP dataset: (0) bio-human-gene1,
(1) bio-human-gene2,
(2) bio-mouse-gene,
(3) ca-AstroPh,
(4) ca-CondMat,
(5) ca-GrQc,
(6) ca-HepPh,
(7) ca-HepTh,
(8) roadNet-CA,
(9) roadNet-PA,
(10) roadNet-TX.}
    \label{fig:graphsinthewild}
\end{figure}
\end{minipage}
\end{wrapfigure} 

\subsection{Comparing different real world networks}
We now consider the feasibility of comparing real world networks using EGSs. Specifically, we take $3$ biological networks, $5$ citation networks and $3$ road networks from the SNAP dataset \citep{snapnets}, and compute the symmetric KL divergences between their EGS with $m=100$ moments. We present the results in a heat map (FIG \ref{fig:graphsinthewild}). We see very clearly that the intra-class divergences between the biological, citation and road networks are much smaller than their inter-class divergences. This strongly suggests that the combination of our EGS method and the symmetric KL divergence can be used to identify similarity in networks. Furthermore, as can be seen in the divergence between the human and mouse network, the spectra of human genes are more closely aligned with each other than they are with the spectra of mouse genes. This suggests a reasonable amount of intra-class distinguishability as well.

\section{EGS for Estimating Cluster Number}\label{subset:clustering_with_spec}
It is known from spectral graph theory \citep{chung1997spectral}, that the number multiplicity of the $0$ eigenvalue in the Laplacian (and the normalized Laplacian) is equal to the number of connected components in the graph \citep{von2007tutorial}. Previous literature has argued \citep{ubaruapplications}, that for a small amount of inter-cluster connections by matrix perturbation theory \citep{bhatia2013matrix} we should expect a number of eigenvalues close to $0$, we make this argument precise with the following Theorem \ref{theorem:normlaplaceproof}.

\begin{theorem}
\label{theorem:normlaplaceproof}
The normalised Laplacian eigenvalue, perturbated by adding a single edge between nodes $1$ and $m+1$ from two previously disconnected clusters $A$ and ,$B$ is bounded to first order by
\begin{equation}
    \bigg|\frac{1}{\sqrt{d_{1}d_{a}}}+\frac{1}{\sqrt{d_{m+1}d_{b}}}-\frac{2}{\sqrt{d_{1}d_{m+1}}}\bigg|,
\end{equation}
where $d_{i}$ denotes the degree of node $i$ and $1/\sqrt{d_{a}} = \sum_{g \in (g,1)}1/\sqrt{d_{g}}$ and similarly $1/\sqrt{d_{b}} = \sum_{g \in (g,m+1)}1/\sqrt{d_{g}}$, where $\sum_{g \in (g,1)}$ denotes the sum over all nodes connecting to node $1$.
\end{theorem}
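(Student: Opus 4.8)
The plan is to read the added edge as a perturbation of $\mathbf{L}_{\mathrm{norm}}$ and to compute the first-order displacement of the (initially zero) eigenvalue by perturbation theory, taking care that this eigenvalue is degenerate.

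First I would fix the unperturbed data. With $A$ and $B$ disconnected, $\mathbf{L}_{\mathrm{norm}}$ is block diagonal and, as noted above, carries a zero eigenvalue whose multiplicity counts the components; the two relevant null vectors are the degree-weighted cluster indicators $u_A\propto\mathbf{D}^{1/2}\mathbf{1}_A$ and $u_B\propto\mathbf{D}^{1/2}\mathbf{1}_B$ (entries $\sqrt{d_i}$ on the cluster and $0$ off it), since $\mathbf{L}_{\mathrm{norm}}\mathbf{D}^{1/2}\mathbf{1}=0$. I would then write the perturbation explicitly. Inserting the edge $(1,m+1)$ is the symmetric rank-two update $\mathbf{W}\mapsto\mathbf{W}+\epsilon(e_1 e_{m+1}^{\top}+e_{m+1}e_1^{\top})$ together with the degree increments $d_1\mapsto d_1+\epsilon$, $d_{m+1}\mapsto d_{m+1}+\epsilon$. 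Differentiating $\mathbf{L}_{\mathrm{norm}}=\mathbf{I}-\mathbf{D}^{-1/2}\mathbf{W}\mathbf{D}^{-1/2}$ at $\epsilon=0$ produces $\Delta\mathbf{L}$; crucially, because $\mathbf{D}^{-1/2}$ depends nonlinearly on $\epsilon$, the increments at the two affected nodes contribute extra terms $-\tfrac12 d_i^{-3/2}$ alongside the off-diagonal term coming from the new edge itself.

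Since the zero eigenvalue is degenerate I would use degenerate perturbation theory: the first-order shifts are the eigenvalues of the $2\times2$ matrix $M$ with entries $M_{XY}=u_X^{\top}\Delta\mathbf{L}\,u_Y$, $X,Y\in\{A,B\}$, for the normalized $u_A,u_B$. A decisive simplification is that the graph is connected after the insertion, so the global vector $\mathbf{D}^{1/2}\mathbf{1}$ remains an exact null vector; hence one eigenvalue of $M$ is pinned at $0$ and the eigenvalue that actually moves is the remaining root of the $2\times2$ characteristic polynomial. Evaluating the matrix elements, I expect the off-diagonal block (the new edge) to generate the term in $1/\sqrt{d_1 d_{m+1}}$, with the factor $2$ arising from the symmetric $(1,m+1)/(m+1,1)$ pair, and the diagonal blocks, through the linearized degree increments, to generate the neighbour sums $\tfrac{1}{\sqrt{d_1}}\sum_{g\sim1}\tfrac{1}{\sqrt{d_g}}$ and $\tfrac{1}{\sqrt{d_{m+1}}}\sum_{g\sim m+1}\tfrac{1}{\sqrt{d_g}}$, i.e. the $1/\sqrt{d_1 d_a}$ and $1/\sqrt{d_{m+1}d_b}$ of the statement. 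Collecting the signed contributions and applying the triangle inequality then delivers the claimed absolute-value bound.

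The hard part will be the bookkeeping forced by the degeneracy together with the nonlinear dependence of $\mathbf{D}^{-1/2}$ on the edge weight. A naive nondegenerate formula $v^{\top}\Delta\mathbf{L}\,v$ is ill-defined here, so everything rests on projecting $\Delta\mathbf{L}$ correctly onto $\mathrm{span}\{u_A,u_B\}$ and on using the persistent global null mode to isolate the single eigenvalue that is displaced. Linearizing the degree normalization consistently at nodes $1$ and $m+1$ — rather than updating only $\mathbf{W}$ — is the step most prone to error, and it is exactly this step that manufactures the neighbour-sum terms; a cleaner but looser alternative, worth keeping in reserve, is to upper-bound the displaced eigenvalue directly by a Rayleigh quotient (Courant–Fischer) with a test vector supported near the two endpoints.
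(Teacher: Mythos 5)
Your overall strategy is genuinely different from the paper's: the paper never looks at eigenvectors at all, but simply invokes Weyl's inequality $|\lambda_i'-\lambda_i|\le\|\mathbf{L}_{G'}-\mathbf{L}_G\|$ and then expands the entries of the difference matrix to first order in $1/d_1,\,1/d_{m+1}$, whereas you target the degenerate zero eigenvalue with first-order degenerate perturbation theory. The strategy itself is viable (and your observation that one eigenvalue of the projected $2\times2$ matrix stays pinned at zero is correct), but the step where you ``expect'' the matrix elements to reproduce the terms of the theorem fails, and this is a genuine gap. If you actually evaluate $M_{XY}=u_X^{\top}\Delta\mathbf{L}\,u_Y$ with $u_A\propto\mathbf{D}^{1/2}\mathbf{1}_A$, $u_B\propto\mathbf{D}^{1/2}\mathbf{1}_B$, the $\mathbf{D}^{\pm1/2}$ factors cancel against the weights in the eigenvectors and none of the advertised quantities appears. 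Writing $\mathrm{vol}(A)=\sum_{i\in A}d_i$, the new-edge term gives $u_A^{\top}\mathbf{D}^{-1/2}\dot{\mathbf{W}}\mathbf{D}^{-1/2}u_B=\mathbf{1}_A^{\top}\dot{\mathbf{W}}\mathbf{1}_B/\sqrt{\mathrm{vol}(A)\mathrm{vol}(B)}=1/\sqrt{\mathrm{vol}(A)\mathrm{vol}(B)}$ (no $1/\sqrt{d_1d_{m+1}}$ survives), and the linearized degree increments give $1/\mathrm{vol}(A)$ and $1/\mathrm{vol}(B)$ rather than the neighbour sums. Altogether
\begin{equation}
M=\begin{pmatrix}\dfrac{1}{\mathrm{vol}(A)} & -\dfrac{1}{\sqrt{\mathrm{vol}(A)\mathrm{vol}(B)}}\\[8pt] -\dfrac{1}{\sqrt{\mathrm{vol}(A)\mathrm{vol}(B)}} & \dfrac{1}{\mathrm{vol}(B)}\end{pmatrix},
\end{equation}
whose eigenvalues are $0$ and $1/\mathrm{vol}(A)+1/\mathrm{vol}(B)$.

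So your route, carried to completion, proves that the displaced eigenvalue equals $1/\mathrm{vol}(A)+1/\mathrm{vol}(B)$ to first order in the edge weight --- a clean, arguably sharper statement, but not the statement of the theorem. No application of the triangle inequality converts these global volume terms into the paper's local expression $\bigl|1/\sqrt{d_1d_a}+1/\sqrt{d_{m+1}d_b}-2/\sqrt{d_1d_{m+1}}\bigr|$; to close the argument you would need a separate inequality comparing the two quantities, which you do not supply and which is not routine (the theorem's bound depends only on degrees near the new edge, yours on whole-cluster volumes). Two smaller points: the exact persistent null vector is $\mathbf{D}'^{1/2}\mathbf{1}$ with the \emph{updated} degrees, not $\mathbf{D}^{1/2}\mathbf{1}$ --- harmless at first order, but it is what actually pins the zero eigenvalue of $M$; and your fallback Courant--Fischer idea is in fact the rigorous version of your computation (with $S=\mathrm{span}\{u_A,u_B\}$ the projected matrix can be computed exactly, no $\epsilon$-expansion needed), but it again lands on the volume expression, not the claimed bound.
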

\begin{proof}
Using Weyl's bound on Hermination matrices \citep{bhatia2013matrix},
\begin{equation}
\begin{aligned}
    \Delta \lambda_{i}=|\lambda'_{i}-\lambda_{i}| \leq ||\mathbf{L}_{G'}-\mathbf{L}_G||.
\end{aligned}
\end{equation}
By the definition of the normalized Laplacian $\tilde{L}_{i,j} = L_{i,j}/\sqrt{d_{i}d_{j}}$
\begin{equation}
    \Delta \tilde{L} = \sum_{g \in (g,1)}\bigg(\frac{1}{\sqrt{d_{1}d_{g}}}-\frac{1}{\sqrt{(d_{1}+1)d_{g}}}\bigg) + \sum_{g \in (g,m+1)}\bigg(\frac{1}{\sqrt{d_{m+1}d_{g}}}-\frac{1}{\sqrt{(d_{m+1}+1)d_{g}}}\bigg) - \frac{2}{\sqrt{d_{1}d_{m+1}}},
\end{equation}
to first order in the binomial expansion. We hence have the result.
\end{proof}
\begin{corollary}
For two clusters with identical degree $d \gg 1$, connected by a single inter-cluster link, the zero eigenvalue eigenvalue is perturbed to first order by at most $ \Delta \lambda_{0} = \frac{1}{d} $.
\end{corollary}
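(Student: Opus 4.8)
The plan is to specialize the general first-order bound from Theorem \ref{theorem:normlaplaceproof} to the symmetric case where both clusters have uniform degree $d$, and then to control each of the three terms in the large-$d$ limit. First I would substitute $d_1 = d_{m+1} = d$ into the bound, which immediately simplifies the third (cross) term $2/\sqrt{d_1 d_{m+1}}$ to exactly $2/d$. The remaining work is to evaluate the two degree-sum terms $1/\sqrt{d_1 d_a}$ and $1/\sqrt{d_{m+1} d_b}$ under the uniform-degree assumption.

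Next I would unpack the definitions $1/\sqrt{d_a} = \sum_{g \in (g,1)} 1/\sqrt{d_g}$ and $1/\sqrt{d_b} = \sum_{g \in (g,m+1)} 1/\sqrt{d_g}$. With every neighbouring node also having degree $d$, each summand is $1/\sqrt{d}$, and node $1$ has exactly $d$ neighbours inside cluster $A$, so the sum evaluates to $d \cdot (1/\sqrt{d}) = \sqrt{d}$. Hence $1/\sqrt{d_1 d_a} = (1/\sqrt{d}) \cdot \sqrt{d} = 1$ — but this is the unperturbed normalisation, so the relevant quantity is the \emph{perturbation} arising from the degree increment $d \mapsto d+1$ at the two endpoints, exactly as written in the proof of Theorem \ref{theorem:normlaplaceproof}. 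The cleaner route is therefore to start from the displayed expression for $\Delta \tilde{L}$, set all degrees equal to $d$, and Taylor-expand $1/\sqrt{d+1}$ to first order.

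Carrying this out, each bracket $1/\sqrt{d\,d_g} - 1/\sqrt{(d+1)d_g}$ becomes $(1/\sqrt{d} - 1/\sqrt{d+1})(1/\sqrt{d})$, and summing over the $d$ neighbours of each endpoint gives a factor $d$. Using $1/\sqrt{d+1} = d^{-1/2}(1 - \tfrac{1}{2d} + O(d^{-2}))$, the difference $1/\sqrt{d} - 1/\sqrt{d+1}$ is $\tfrac{1}{2} d^{-3/2} + O(d^{-5/2})$, so each degree-sum term contributes $d \cdot \tfrac{1}{2} d^{-3/2} \cdot d^{-1/2} = \tfrac{1}{2d}$ to leading order. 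Adding the two endpoint contributions yields $1/d$, which combined with the cross term produces the stated bound $\Delta \lambda_0 = 1/d$. I would then take the absolute value to match the bound of the theorem and note that the $0$ eigenvalue is the one of interest because the two disconnected clusters each contribute a zero eigenvalue to the unperturbed normalised Laplacian.

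The main obstacle is bookkeeping rather than conceptual: I must be careful about whether the cross term $2/\sqrt{d_1 d_{m+1}} = 2/d$ is meant to be retained at full size or is itself subleading, since naively it dominates the $O(1/d)$ degree-sum terms. The resolution is that the bound in Theorem \ref{theorem:normlaplaceproof} is an absolute value of a signed sum, and under the symmetric assumption the positive degree-increment corrections and the negative cross term must be combined with consistent signs before taking the modulus; verifying that the surviving leading-order quantity is precisely $1/d$ (and not, say, $2/d$ or a cancellation to a higher order) is the delicate step. I would double-check this by confirming that the $\tfrac{1}{2d} + \tfrac{1}{2d}$ from the two endpoints matches the intended $1/d$ and that any $2/d$ term has been accounted for in the definition of $d_a, d_b$ rather than appearing separately.
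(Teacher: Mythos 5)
Your proposal is correct and matches the paper's (implicit) derivation of the corollary: specialize the perturbation expression $\Delta \tilde{L}$ from the proof of Theorem \ref{theorem:normlaplaceproof} to uniform degree $d$, binomially expand $1/\sqrt{d+1}$ so each endpoint sum contributes $\tfrac{1}{2d}$, and combine these with the $-\tfrac{2}{d}$ cross term inside the absolute value to get $\bigl|\tfrac{1}{d}-\tfrac{2}{d}\bigr| = \tfrac{1}{d}$. Your observation that one must work from the signed expression $\Delta \tilde{L}$ rather than from the theorem's displayed bound as literally written (which under uniform degrees evaluates to roughly $2$, not $1/d$) is exactly the right reading, and the sign bookkeeping you flag as delicate resolves precisely as you computed.
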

\begin{remark}
Hence for $E$ inter-cluster connections, our bound goes as $E/d$ and hence the intuition of a small change in the $0$ eigenvalue holds if the number of edges between clusters is much smaller than the degree of the nodes within the clusters.
\end{remark}

\begin{table}[hbt]
\vspace{-12pt}
\begin{tabular}{lc}
\begin{minipage}{.48\textwidth}
\begin{algorithm}[H]
	\caption{Cluster Number Estimation.}
	\label{alg:clusteralg}
	\begin{algorithmic}[1]
		\STATE {\bfseries Input:} Normalized graph Laplacian $\mvec{L}_{\mathrm{norm}}$, graph dimension $n$, tolerance $\eta$
		
		\STATE {\bfseries Output:} Number of clusters $N_{c}$
		\STATE EGS $p(\lambda)\leftarrow$ Algorithm \ref{alg:EGS}($\mvec{L}_{\mathrm{norm}}$)
		\STATE Find minimum $\lambda_*$ that satisfy $\frac{dp(\lambda)}{d\lambda}|_{\lambda=\lambda_*} \leq \eta \thinspace$ and  $\frac{d^{2}p(\lambda)}{d\lambda^{2}}_{\lambda=\lambda_*} > 0$
		\STATE Calculate $N_{c} = n\int_{0}^{\lambda*} p(\lambda)d\lambda$	
	\end{algorithmic}
\end{algorithm}
\end{minipage}&
\begin{minipage}{.5\textwidth}
\begin{figure}[H]
	\centering
	\includegraphics[width=0.9\linewidth]{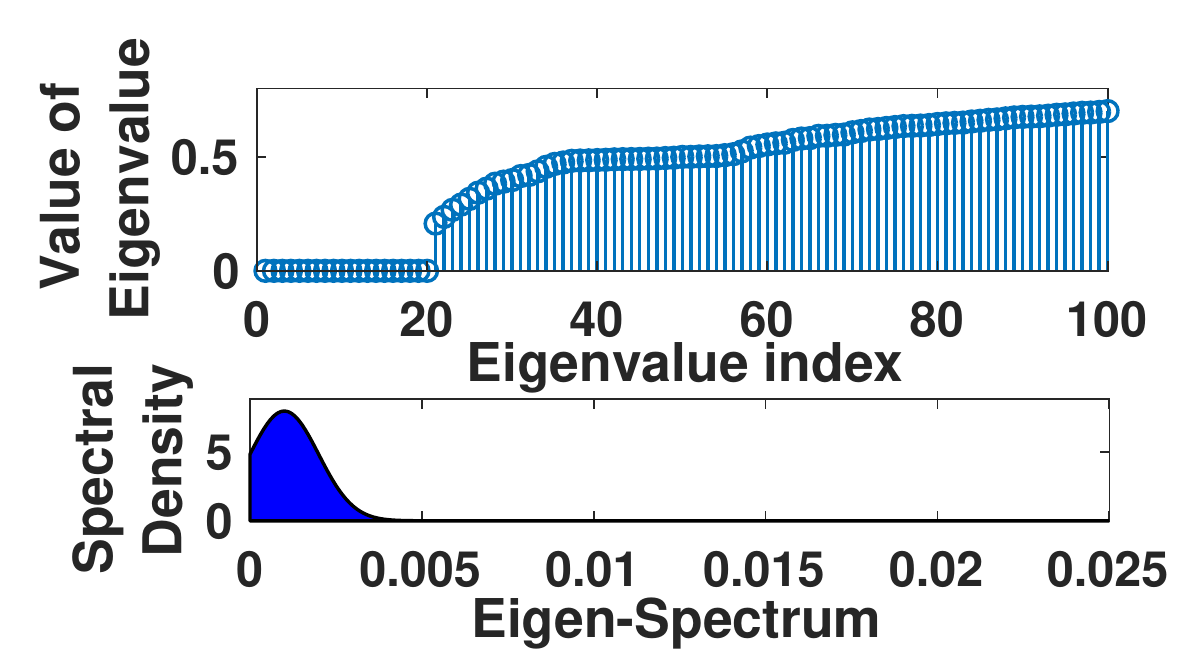}
	\caption{Eigenvalues of the Email dataset with clear spectral gap and $\lambda_* \approx 0.005$. The shaded area multiplied by the number of nodes $n$ predicts the number of clusters.}
	\label{fig:email}	
\end{figure}
\end{minipage}
\end{tabular}
\vspace{-12pt}
\end{table}	


\subsection{Learning the number of clusters in large graphs}
For the case of large sparse graphs, where only iterative methods such as the Lanczos algorithm can be used, the same arguments from  Section \ref{subsec:motivations} apply. 
This is because the Dirac's delta functions are now weighted, and to obtain a reliable estimate of the eigengap, one must smooth the Dirac's delta functions. We would expect a smoothed spectral density plot to have a spike near $0$. We expect the moments of the spectral density to encode this information and the mass of this peak to be spread. We hence look for the first spectral minimum in the EGS and calculate the number of clusters as shown in Algorithm \ref{alg:clusteralg}. We conduct a set of experiments to evaluate the effectiveness of our spectral method in Algorithm \ref{alg:clusteralg} for learning the number of distinct clusters in a network, where we compare it against the Lanczos algorithm with kernel smoothing on both synthetic and real-world networks.

\subsubsection{Synthetic networks}
The synthetic data consists of disconnected sub-graphs of varying sizes and cluster numbers, to which a small number of intra-cluster edges are added. We use an identical number of matrix vector multiplications, i.e., $m=80$ (see Appendix \ref{subsec:implementation_details} for experimental details for both EGS and Lanczos methods), and estimate the number of clusters and report the fractional error. The results are shown in Table \ref{table:fractional_error_syn}. In each case, the method achieving lowest detection error is highlighted in bold. It is evident that the EGS approach outperforms Lanczos as the number of clusters and the network size increase. 
We observe a general improvement in performance for larger graphs, visible in the differences between fractional errors for EGS as the graph size increases and not kernel-smoothed Lanczos. 

\begin{table}[h]
	\caption{Fractional error in cluster number detection for synthetic networks using EGS and Lanczos methods with 80 moments. $n_c$ denotes the number of clusters in the network and $n$ the number of nodes.}\label{table:fractional_error_syn}
    \centering
	\begin{tabular}{lcccr}
		\toprule
		$n_c$ ($n$) & 9 (270)  & 30 (900) & 90 (2700) & 240 (7200) \\ 
		\midrule
		 Lanczos & $ \mathbf{3.20 \times 10^{-3}}$  & $ 1.41\times 10^{-2}$ & $1.81\times 10^{-2}$ & $2.89\times 10^{-2}$ \\
		 EGS  & $9.70 \times 10^{-3}$  & $ \mathbf{6.40 \times 10^{-3}}$ & $\mathbf{5.80\times 10^{-3}}$ & $\mathbf{3.50\times 10^{-3}}$ \\
		\bottomrule
	\end{tabular}
\end{table}

To test the performance of our approach for networks that are too large to apply eigen-decomposition, we generate two large networks by mixing the ER, WA, BA random graph models. The first large network has a size of  201,600 nodes and comprises 305 interconnected clusters whose size varies from 500 to 1000 nodes. The second large network has a size of 404,420 nodes and comprises interconnected 1355 clusters whose size varies from 200 to 400 nodes. The results in FIG \ref{fig:largeSyntheNet} show that for both methods, the detection error generally decreases as more moments are used, and our EGS approach again outperforms the Lanczos method for both large synthetic networks.

\begin{table}[t]
\begin{tabular}{ll}
\begin{minipage}{.45\textwidth}
\begin{figure}[H]
	\centering
	\begin{subfigure}{0.5\linewidth}
		\centering
	    \includegraphics[trim=0cm 0cm 0.1cm 0.0cm, clip, width=1.0\linewidth]{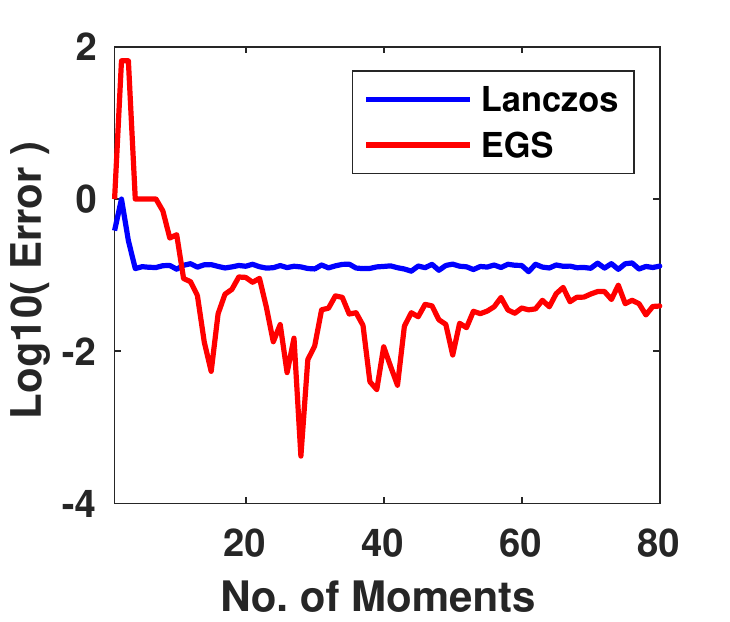}
	    \caption{305 clusters}
	    \label{subfig:emailerror1}	
	\end{subfigure}%
	\begin{subfigure}{0.5\linewidth}
		\centering
    	\includegraphics[trim=0cm 0cm 0.1cm 0.0cm, clip, width=1.0\linewidth]{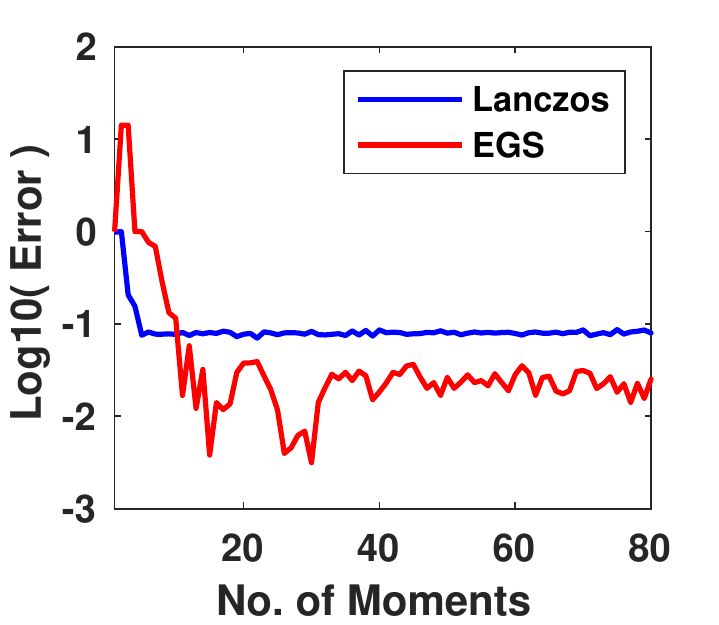}
	    \caption{1,355 clusters}
	    \label{subfig:emailerror2}	
	\end{subfigure}%
	\caption{Log error of cluster number detection using EGS and Lanczos methods on large synthetic networks with (a) 201,600 nodes and 305 clusters and (b) 404,420 nodes and 1,355 clusters.}
	\label{fig:largeSyntheNet}	
\end{figure} 
\end{minipage}&
\begin{minipage}{.45\textwidth}
\begin{figure}[H]
    \centering
	\begin{subfigure}{0.5\linewidth}
		\centering
	\includegraphics[trim=0cm 0cm 0.1cm 0.0cm, clip,width=1.0\linewidth]{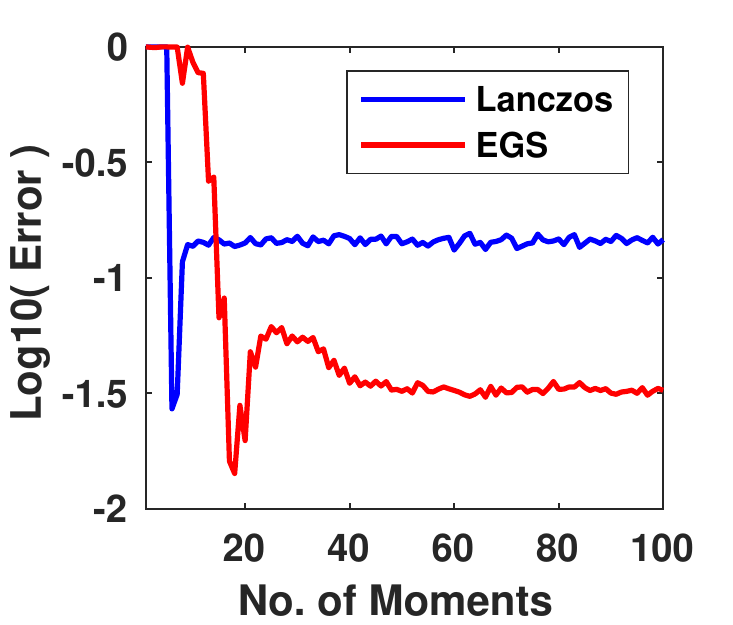}
    \caption{Email dataset}
    \label{fig:emailerror}	
	\end{subfigure}%
	\begin{subfigure}{0.5\linewidth}
		\centering
		\includegraphics[trim=0cm 0cm 0.1cm 0.0cm, clip,width=1.0\linewidth]{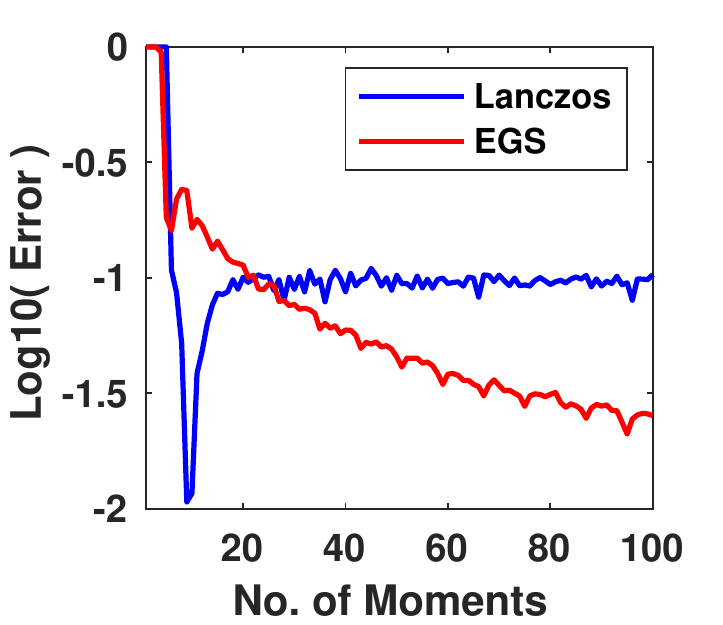}
		\caption{NetScience dataset}
		\label{fig:netscienceerror}
	\end{subfigure}
	\caption{Log error of cluster number detection using EGS and Lanczos methods on small-scale real world networks (a) Email network of 1,003 nodes and (b) NetScience network of 1,589 nodes.}
	\label{fig:test2}
\end{figure}
\end{minipage}
\end{tabular}
\vspace{-12pt}
\end{table}

\subsubsection{Small real world networks}\label{subsubsec:small_real_network}
We next experiment with relatively small real world networks, such as the Email network in the SNAP dataset, which is an undirected graph where the $n=1,003$ nodes represent members of a large European research institution and the edges represent the existence of email communication between them. For such network, we can still calculate the ground-truth number of clusters by computing the eigenvalues explicitly and finding the spectral gap near $0$. For the Email network, we count $20$ very small eigenvalues before a large jump in magnitude (measured in the log scale) and set this as the ground-truth. This is shown in FIG \ref{fig:email}, where we display the value of each of the eigenvalues in increasing order and how this results in a broadened peak in the EGS. The area under the curve multiplied by the number of network nodes is the number of clusters $n_{c}$. 
We note that the number $20$ differs from the value of $42$ given by the number of departments at the research institute in this dataset. A likely reason for this ground-truth inflation is that certain departments, Astrophysics, Theoretical Physics and Mathematics for example, may collaborate to such an extent that their division in name may not be reflected in terms of node connection structure.
We plot the log error against the number of moments for both EGS and Lanczos in FIG \ref{fig:emailerror}, with EGS showing superior performance. We repeat the experiment on the Net Science collaboration network, which represents a co-authorship network of $1,589$ scientists ($n = 1,589$) working on network theory and experiment \citep{newman2006finding}. The results in FIG \ref{fig:netscienceerror} show that EGS quickly outperforms the Lanczos algorithm after around $20$ moments.

\subsubsection{Large real world networks} \label{subsubsec:large_real_network}

For large datasets with $n \gg 10^{4}$, where the Cholesky decomposition becomes completely prohibitive even for powerful machines, we can no longer define a ground-truth using a complete eigen-decomposition. Alternative ``ground-truths'' supplied in \citep{mislove-2007-socialnetworks}, regarding each set of connected components with more than 3 nodes as a community, are not universally accepted. This definition, along with that of self-declared group membership \citep{yang2015defining}, often leads to contradictions with our definition of a community. A notable example is the Orkut dataset, where the number of stated communities is greater than the number of nodes \citep{snapnets}. Beyond being impossible to learn such a value from the eigenspectra, if the main reason to learn about clusters is to partition groups and to summarise networks into smaller substructures, such a definition is undesirable.

We present our findings for the number of clusters in the DBLP ($n=317,080$), Amazon ($n=334,863$) and YouTube ($n=1,134,890$) networks \citep{snapnets} in Table \ref{table:largedata}, where we use a varying number of moments. We see that for both the DBLP and Amazon networks, the number of clusters $N_{c}$ seems to converge with increasing moments number $m$, whereas for YouTube such a trend is not visible. This can be explained by looking at the approximate spectral density of the networks implied by maximum entropy in FIG \ref{fig:bigdata}. For both DBLP and Amazon (FIG \ref{fig:DBLP100moments} and \ref{fig:amazon100moments} respectively), we see that our method implies a clear spectral gap near the origin, indicating the presence of clusters. Whereas for the YouTube dataset, shown in FIG \ref{fig:youtube100moments}, no such clear spectral gap is visible and hence the number of clusters cannot be estimated accurately. 

\begin{figure}[H]
	\centering
	\vspace{-0.5cm}
	\begin{subfigure}{0.33\linewidth}
		\centering
	    \includegraphics[trim=0cm 0.cm 0.cm 0.cm, clip, width=1.0\linewidth]{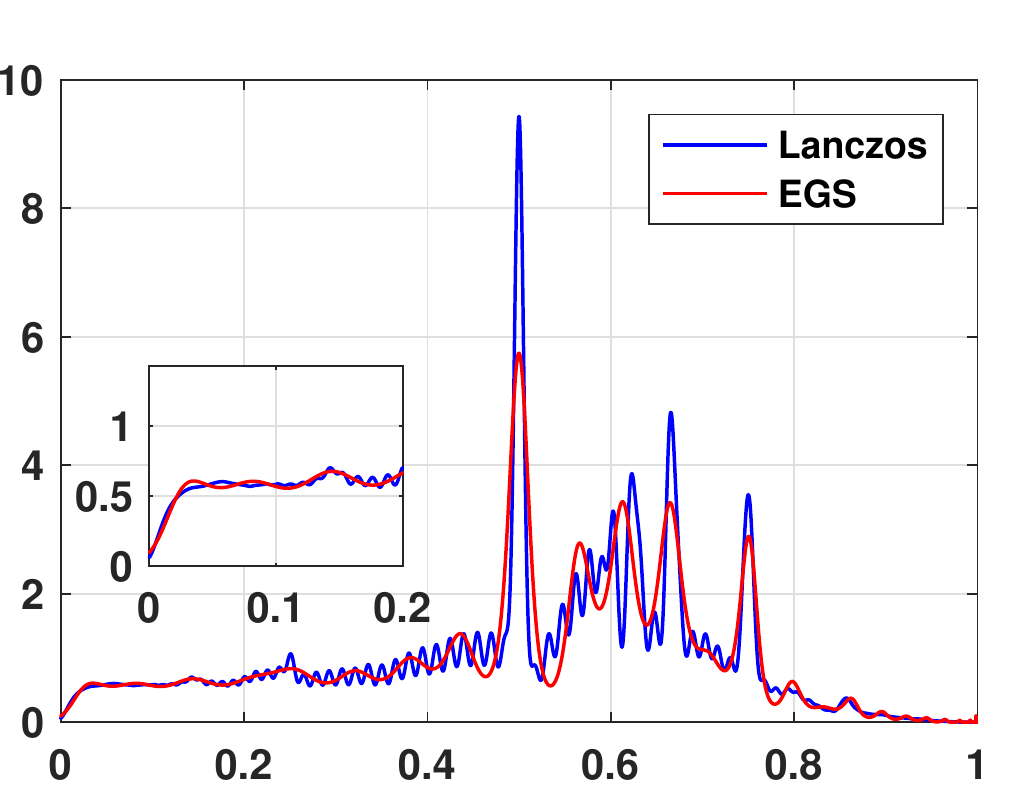}
	    \caption{DBLP}
	    \label{fig:DBLP100moments}	
	\end{subfigure}%
	\begin{subfigure}{0.33\linewidth}
		\centering
	    \includegraphics[trim=0cm 0.cm 0.cm 0.cm, clip, width=1.0\linewidth]{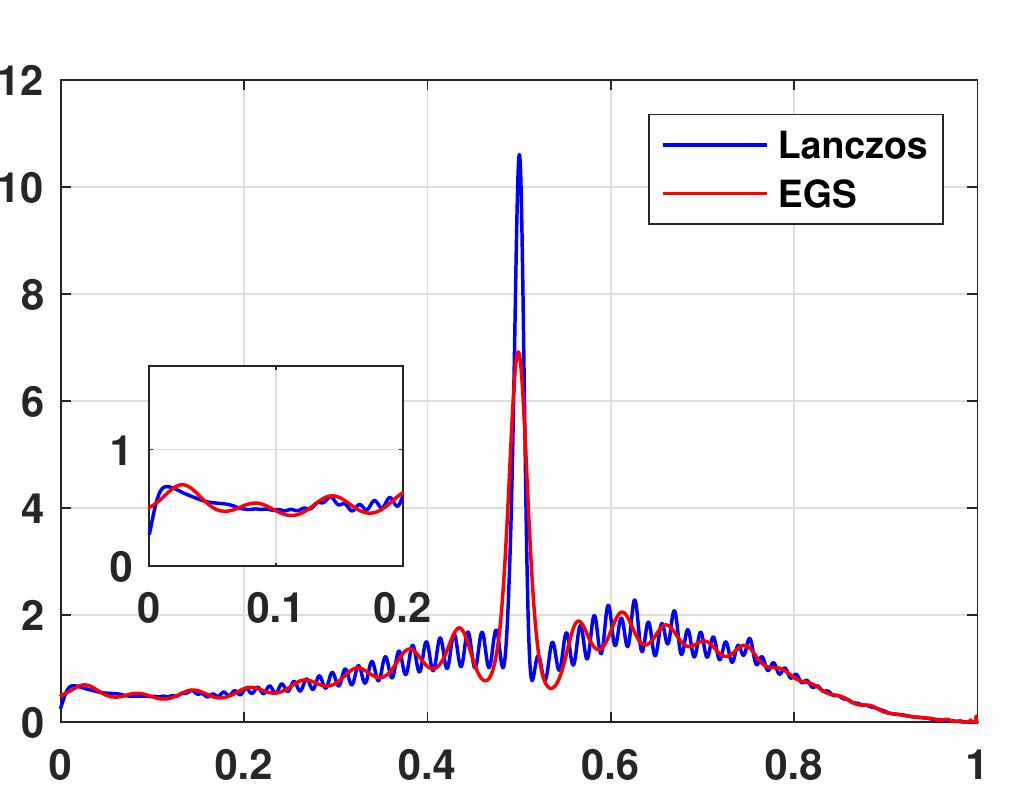}
	    \caption{Amazon}
    	\label{fig:amazon100moments}
    \end{subfigure} 
	\begin{subfigure}{0.33\linewidth}
    	\includegraphics[trim=0cm 0.cm 0.cm 0.cm, clip, width=1.0\linewidth]{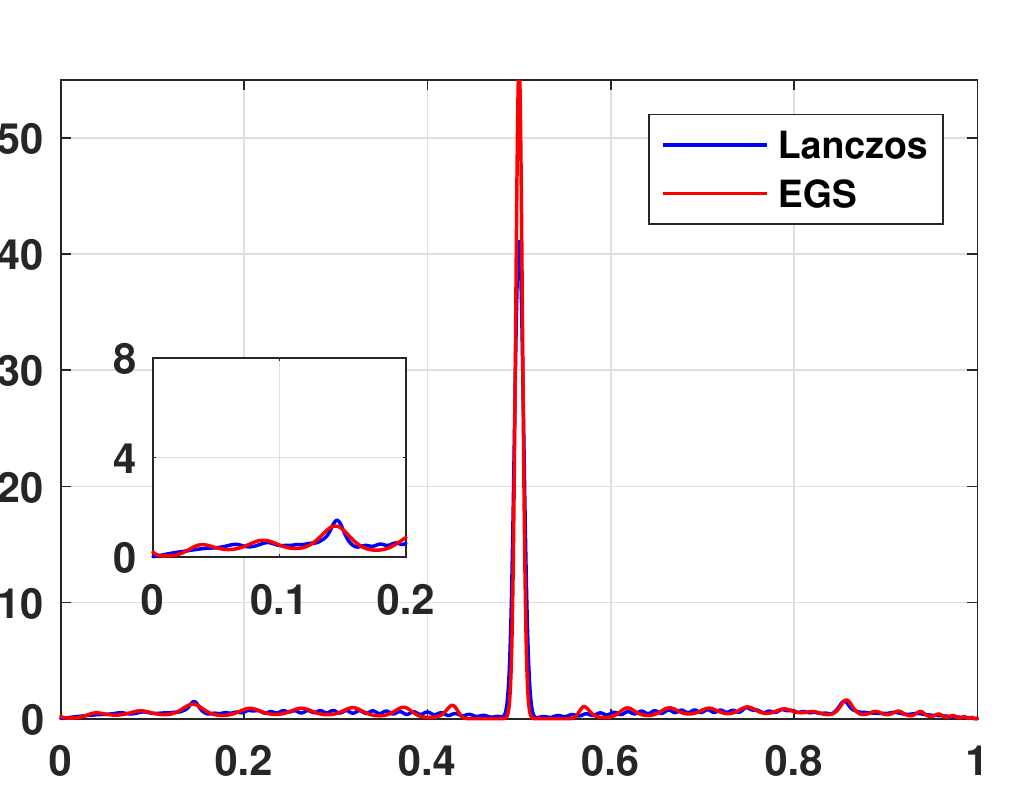}
    	\caption{YouTube}
    	\label{fig:youtube100moments}	
	\end{subfigure}%
	\caption{Spectral densities for DBLP, Amazon and YouTube datasets.}\label{fig:bigdata}
	\vspace{-0.5cm}
\end{figure} 


\begin{table}[t]
	\caption{Cluster number detection by EGS for DBLP, Amazon and YouTube . }	\label{table:largedata}
	\centering
	\begin{tabular}{lcccr}
	\hline
		\toprule
		Moments  & 40  & 70  & 100 \\
		\midrule
		DBLP ($n=317,080$)    & $ 2.215\times 10^{4}$   & $8.468 \times 10^{3}$  & $8.313\times 10^{3}$   \\
		Amazon ($n=334,863$) & $2.351\times 10^{4}$   & $1.146\times 10^{4}$   & $1.201\times 10^{4}$   \\
		Youtube ($n=1,134,890$)& $4.023\times 10^{3}$   & $1.306\times 10^{4}$   & $1.900\times 10^{4}$   \\
		\bottomrule
	\end{tabular}
\end{table}


\section{Conclusion}
In this paper, we propose a novel, efficient framework for learning a continuous approximation to the spectrum of large scale graphs, which overcomes the limitations introduced by kernel smoothing. We motivate the informativeness of spectral moments using the link between random graph models and random matrix theory. We show that our algorithm is able to learn the limiting spectral densities of random graph models for which analytical solutions are known. 
We showcase the strength of this framework in two real world applications, namely, computing the similarity between different graphs and detecting the number of clusters in the graph. Interestingly, we are able to classify different real world networks with respect to their similarity to classical random graph models. The EGS may be of further use to researchers studying network properties and similarities. 

\newpage
\appendix

\section{Stochastic Trace Estimation} \label{subset: stoctrace}

The intuition behind stochastic trace estimation is that we can accurately approximate the moments of $\lambda$ with respect to the spectral density $p(\lambda)$ by using computationally cheap matrix-vector multiplications. The moments of $\lambda$ can be estimated using a Monte-Carlo average,
\begin{equation}
    n\,\mathbb{E}_{p}(\lambda^{m}) =\mathbb{E}_{\mvec{v}}(\mvec{v}^{T} \mvec{X}^{m}\mvec{v}) \approx \frac{1}{d}\sum_{j=1}^{d}\mvec{v}_{j}^{T}\mvec{X}^{m}\mvec{v}_{j},
\end{equation}
where $\mvec{v}_j$ is any random vector with zero mean and unit covariance and $\mvec{X}$ is a $n \times n$ matrix whose eigenvalues are $\{ \lambda_i \}_{i=1}^n$. This enables us to efficiently estimate the moments in $\mathcal{O}(d\times m\times n_\mathrm{nz})$ for sparse matrices, where $d\times m \ll n$. We use these as moment constraints in our entropic graph spectrum (EGS) formalism to derive the functional form of the spectral density. Examples of this in the literature include \citep{ubaru2017fast,ete}.

\begin{algorithm}[H]
	\caption{Learning the Graph Laplacian Moments.}
	\label{alg:preprocessing}
	\begin{algorithmic}[1]
		\STATE {\bfseries Input:} Normalized Laplacian $\mvec{L}_{\mathrm{norm}}$, Number of Probe Vectors $d$, Number of moments required $m$
		\STATE {\bfseries Output:} Moments of Normalised Laplacian $\{\mu_{i}\}$
		\FOR {$i$ in $1,..,d$}
			\STATE  Initialise random vector $\mvec{z}_{i}\in R^{1\times n}$
			\FOR {$j$ in $1,..,m$}
				\STATE $\mvec{z}_{i}' = \mvec{L}_{\mathrm{norm}}\mvec{z}_{i}$
				\STATE $\rho_{ij} =  \mvec{z}_{i}\Trans \mvec{z}_{j}'$
			\ENDFOR
		\ENDFOR
		\STATE $\mu_{i} = 1/d \times \sum_{j=1}^{d}\rho_{ij}$ 
	\end{algorithmic}
\end{algorithm}

\section{Comment on the Lanczoz Algorithm}\label{subsec: lanczos}
In the state-of-the-art iterative algorithm Lanczos \citep{ubaru2017fast}, the tri-diagonal matrix  $\mvec{T}^{m\times m}$ can be derived from the moment matrix $\mvec{M}^{m\times m}$, corresponding to the discrete measure $d\alpha(\lambda)$ satisfying the moments $\mu_{i} = v^{T}X^{i}v = \int \lambda^{i}d\alpha(\lambda)$ for all $i \leq m$ \citep{golub1994matrices} and hence it can be seen as a weighted Dirac approximation to the spectral density matching the first $m$ moments.
The weight given on every Ritz eigenvalue $\lambda''_{i}$ (the eigenvalues of the matrix $\mvec{T}^{m\times m}$) is the square of the first component of the corresponding eigenvector, i.e., ${[\phi_{i}]_1}^{2}$, hence the approximated spectral density can be written as,
\begin{equation}
    \frac{1}{n}\sum_{i}^{n}\delta(\lambda-\lambda_{i}) \approx \sum_{i}^{m}w_{i}\delta(\lambda-\lambda''_{i}) =  \sum_{i}^{m}\phi_{i}[1]^{2}\delta(\lambda-\lambda''_{i}).
\end{equation}

\section{Experimental Details} \label{subsec:implementation_details}
We use $d=100$ Gaussian random vectors for our stochastic trace estimation, for both EGS and Lanczos \citep{ubaru2017fast}. We explain the procedure of going from adjacency matrix to Laplacian moments in Algorithm \ref{alg:preprocessing}. When comparing EGS with Lanczos, we set the number of moments $m$ equal to the number of Lanczos steps, as they are both matrix vector multiplications in the Krylov subspace. We further use Chebyshev polynomial input instead of power moments for improved performance and conditioning. In order to normalise the moment input we use the normalised Laplacian with eigenvalues bounded by $[0,2]$ and divide by $2$. 
To make a fair comparison we take the output from Lanczos \citep{ubaru2017fast} and apply kernel smoothing \citep{lin2016approximating} before applying our cluster number estimator.

\begin{figure}[t]
    \centering
    \includegraphics[width = 0.8\linewidth]{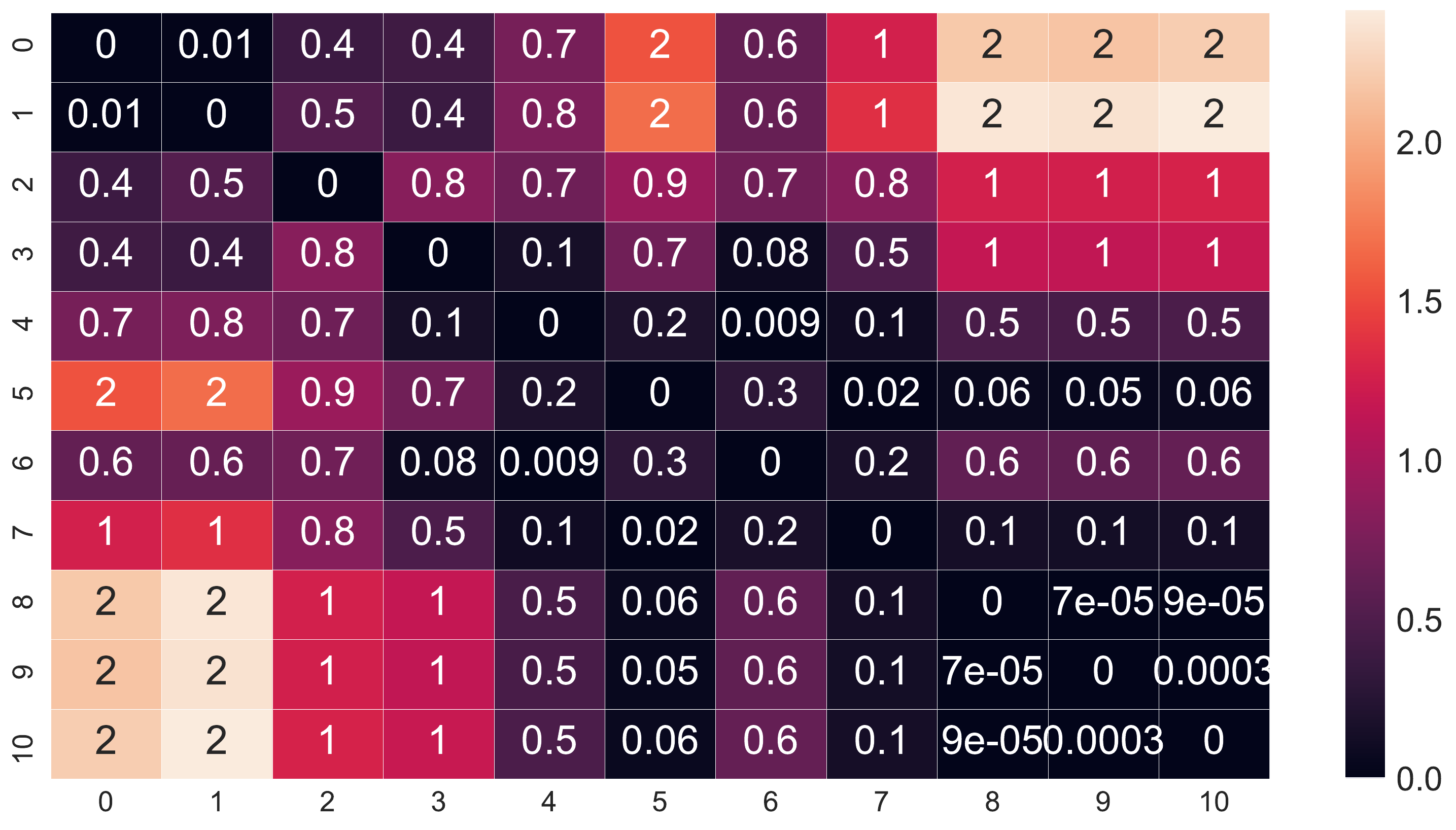}
    \caption{Symmetric KL heatmap, obtained using only $3$ moments, i.e., Gaussian approximation, between 9 graphs from the SNAP dataset: (0) bio-human-gene1,
(1) bio-human-gene2,
(2) bio-mouse-gene,
(3) ca-AstroPh,
(4) ca-CondMat,
(5) ca-GrQc,
(6) ca-HepPh,
(7) ca-HepTh,
(8) roadNet-CA,
(9) roadNet-PA,
(10) roadNet-TX.}
    \label{fig:graphsinthewild3moments}
\end{figure}

\begin{figure}[t]
    \centering
    \includegraphics[width = 0.8\linewidth]{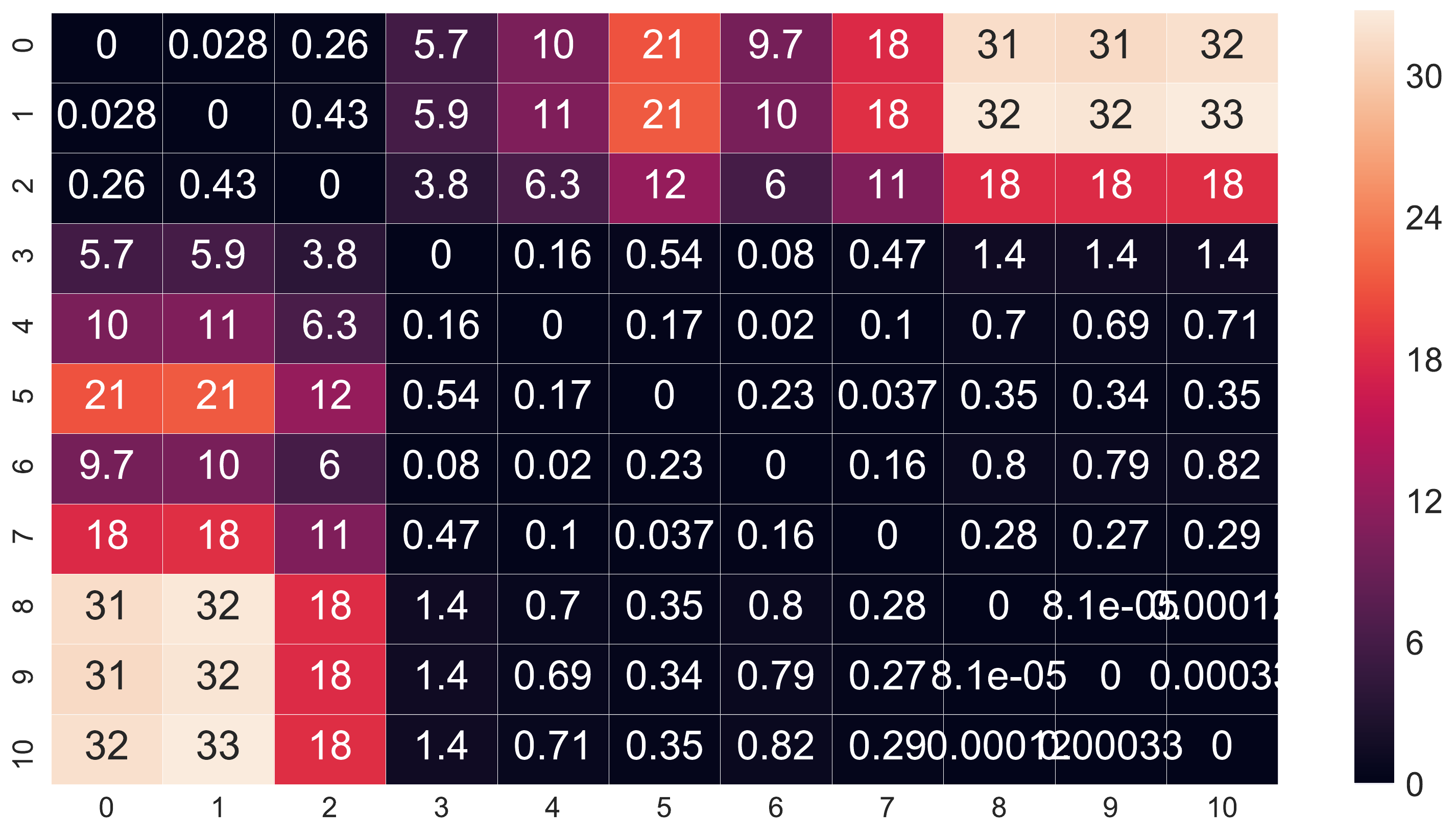}
    \caption{Symmetric KL heatmap, obtained using only $8$ moments, between 9 graphs from the SNAP dataset: (0) bio-human-gene1,
(1) bio-human-gene2,
(2) bio-mouse-gene,
(3) ca-AstroPh,
(4) ca-CondMat,
(5) ca-GrQc,
(6) ca-HepPh,
(7) ca-HepTh,
(8) roadNet-CA,
(9) roadNet-PA,
(10) roadNet-TX.}
    \label{fig:graphsinthewild8moments}
\end{figure}

\section{EGSs of Real World Networks with Varying Number of Moments}
In order to more clearly showcase the practical value of having a EGS based on a large number of moments, we show the symmetric KL divergence between real world networks using a $3$ moment Gaussian approximation. The Gaussian is fully defined by its normalization constant, mean and variance and so can be specified with $m=3$ Lagrange multipliers. The results for the same analysis as in Figure \ref{fig:graphsinthewild}, but now obtained using a $3$ moment Gaussian approximation, are shown in Figure \ref{fig:graphsinthewild3moments}. The networks are still somewhat distinguished; however, one can see for example that citation networks and road networks are less clearly distinguished to the point that inter-class distance is lessened compared to intra-class distance, which for the purpose of network classification is not a particularly helpful property. 
The problem still persists for more moments; for example, when we choose $m=8$, which is what has been reported stable for other off-the-shelf maximum entropy algorithms, 
similar results are observed in Figure \ref{fig:graphsinthewild8moments}.
In comparison, this is not the case for more moments in Figure \ref{fig:graphsinthewild} in the main text.

\section{On the Importance of Moments} 
\label{sec:momentsmatter}
Given that all iterative methods essentially generate a $m$ moment empirical spectral density (ESD) approximation, it is instructive to ask what information is contained within the first $m$ spectral moments. 

To answer this question concretely, we consider the spectra of random graphs. By investigating the finite size corrections and convergence of individual moments of the empirical spectral density (ESD) compared to those of the limiting spectral density (LSD), we see that the observed spectra are faithful to those of the underlying stochastic process. Put simply, 
given a random graph model, if we compare the moments of the spectral density observed from a single instance of the model to that averaged over many instances, we see that the moments we observe are informative about the underlying stochastic process.

\subsubsection{ESD moments converge to those of the LSD}
For random graphs, with independent edge creation probabilities, their spectra can be studied through the machinery of random matrix theory \citep{akemann2011oxford}.



We consider the entries of an $n\times n$ matrix $\mathbf{X}_n$ to be zero mean and independent, with bounded moments. For such a matrix, a natural scaling which ensures we have bounded norm as $n \rightarrow \infty$ is $\mathbf{X}_{n} = \mathbf{M}_{n}/\sqrt{n}$.  
It can be shown (see for instance \citep{feier2012methods}) that the moments of a particular instance of a random graph and the related random matrix $\mathbf{X}_n$ converge to those of the limiting counterpart in probability with a correction of $\mathcal{O}(n^{-2})$.

\subsubsection{Finite size corrections to moments get worse with larger moments}
 A key result, akin to the normal distribution for classical densities, is the semi-circle law for random matrix spectra \citep{feier2012methods}. For matrices with independent entries $a_{ij}$, $\forall i>j$, with common element-wise bound $K$, common expectation $\mu$ and variance $\sigma^{2}$, and diagonal expectation $\mathbb{E}a_{ii}=\nu$, it can be shown that the corrections to the semi-circle law for the moments of the eigenvalue distribution,
\begin{equation}
 \int x^{m}d\mu(x) = \frac{1}{n}\text{Tr}\mathbf{X}^{m}_{n},
\end{equation}
have a corrective factor bounded by \citep{furedi1981eigenvalues}
\begin{equation}
    \frac{K^{2}m^{6}}{2\sigma^{2}n^{2}}.
\end{equation}
Hence, the finite size effects are larger for higher moments than that for the lower counterparts. This is an interesting result, as it means that for large graphs with $n \rightarrow \infty$, the lowest order moments, which are those learned by any iterative process, best approximate those of the underlying stochastic process.

\section{Analytic Forms for the Differential Entropy and divergence from EGS}\label{sec: analytic_diff_ent_and_divergence}

To calculate the differential entropy we simply note that
\begin{equation}
    \mathcal{S}(p) = \int p(\lambda) (1+\sum_{i}^{m}\alpha_{i}\lambda^{i})d\lambda = 1+\sum_{i}^{m}\alpha_{i}\mu_{i}.
\end{equation}
\noindent The KL divergence between two EGSs, $p(\lambda) = \exp[-(1+\sum_{i}\alpha_{i}\lambda^{i})]$ and $q(\lambda) = \exp[-(1+\sum_{i}\beta_{i}\lambda^{i})]$, can be written as,
\begin{equation}
    \mathcal{D}_{\mathrm{KL}}(p||q) = \int p(\lambda)\log \frac{p(\lambda)}{q(\lambda)}d\lambda = -\sum_{i}(\alpha_{i}-\beta_{i})\mu_{i}^{p},
\end{equation}
where $\mu_{i}^{p}$ refers to the $i$-th moment constraint of the density $p(\lambda)$. Similarly, the symmetric-KL divergence can be written as,
\begin{equation}
    \frac{\mathcal{D}_{\mathrm{KL}}(p||q)+\mathcal{D}_{\mathrm{KL}}(q||p)}{2} = \frac{\sum_{i}(\alpha_{i}-\beta_{i})(\mu_{i}^{q}-\mu_{i}^{p})}{2},
\end{equation}
where all the $\alpha$ and $\beta$ are derived from the optimisation and all the $\mu$ are given from the stochastic trace estimation.


\section{Spectral Density with More Moments}

We display the process of spectral learning for both EGS and Lanczos, by plotting the spectral density of both methods against the ground-truth in FIG \ref{fig:emaildensity}. In order to make a valid comparison, we smooth the implied density using a Gaussian kernel with $\sigma = 10^{-3}$. Whilst this number could in theory be optimised over, we considered a range of values and took the smallest for which the density was sufficiently smooth, i.e., everywhere positive on the bounded domain $[0,1]$. We note that both EGS and Lanczos approximate the ground-truth better with a greater number of moments $m$ and that Lanczos learns the extrema of the spectrum before the bulk of the distribution while EGS captures the bulk right from the start.

\begin{figure}[t]
	\centering
	\includegraphics[trim=0cm 0cm 0.0cm 0.0cm, clip, width=1.0\linewidth]{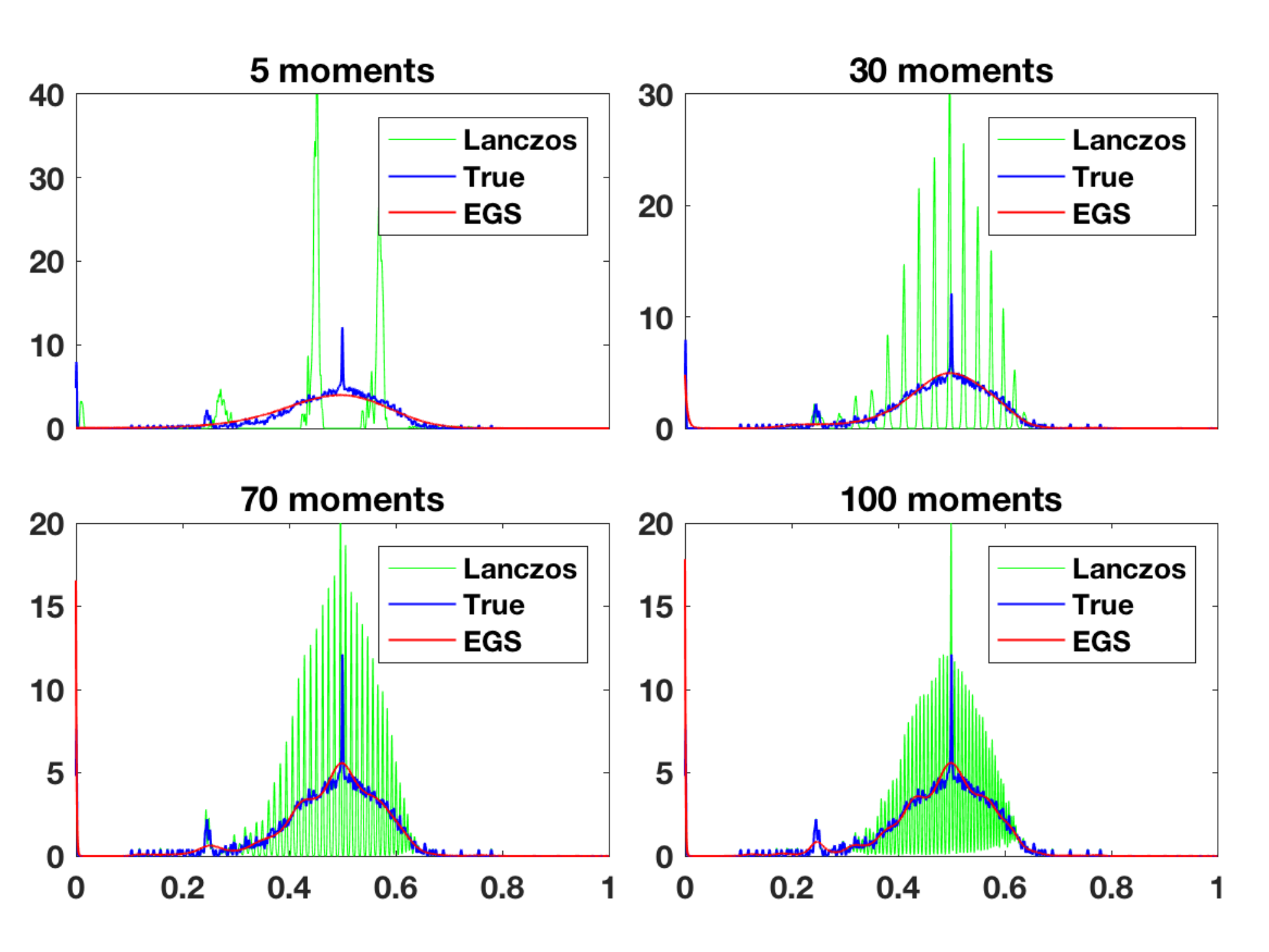}
	\caption{Spectral density for varying number of moments $m$, for both EGS and Lanczos algorithms as well as the ground-truth.}
	\label{fig:emaildensity}	
\end{figure} 

\bibliographystyle{apsrev4-1}
\bibliography{bibi}

\end{document}